\documentclass{article}

\usepackage{microtype}
\usepackage{graphicx}
\usepackage{subfigure}
\usepackage{booktabs} 
\usepackage{setspace}

\usepackage{hyperref}

\usepackage[accepted]{icml2019}
\usepackage[utf8]{inputenc} 
\usepackage[T1]{fontenc}    

\usepackage{amssymb}
\usepackage{algorithm}
\usepackage{algpseudocode}

\usepackage{url}            
\usepackage{booktabs}       
\usepackage{nicefrac}       
\usepackage{microtype}      
\usepackage{amsmath}
\usepackage{amsthm}
\usepackage{bm,bbm}

\usepackage{xcolor}

\usepackage{placeins}

\hypersetup{%
  colorlinks=true,
  urlcolor=blue,%
  linkcolor=blue,%
  citecolor=blue,%
  linkbordercolor=blue,
  pdfborderstyle={/S/U/W 1}
}

\usepackage{enumitem}
\setlist{leftmargin=5.5mm}
\setitemize{leftmargin=*,itemsep=0.25em,label=\raisebox{0.25ex}{\tiny$\bullet$}}

\usepackage[labelfont=footnotesize, textfont=footnotesize]{caption}

\captionsetup[table]{justification=raggedright,labelsep=colon}


\newtheorem{lem}{Lemma}

\newtheorem{prop}{Proposition}

\theoremstyle{definition}

\newtheorem{defn}{Definition}

\newtheorem{example}{Example}
\DeclareMathOperator*{\argmin}{\arg\!\min}

\newcommand{\mean}{\mathbb{E}}
\newcommand{\ExpVal}[2]{\mean{}\left[ #2 \right]}
\newcommand{\EE}[1]{\ExpVal{}{#1}}

\newcommand{\KL}{\textnormal{D}_{\scalebox{.6}{\textnormal KL}}}


%


\newcommand{\calX}{\mathcal{X}}

\newcommand{\bx}{\bm{x}}

\renewcommand{\tilde}{\widetilde}

\newcommand{\Reals}{\mathbb{R}}

\newcommand{\defined}{\triangleq}

\newcommand{\indicator}[1]{\mathbb{I}{#1}}

\newcommand{\PXa}{P_0}
\newcommand{\PXb}{P_1}

\newcommand{\PYhata}{P_{\hat{Y}|S=0}}
\newcommand{\PYhatb}{P_{\hat{Y}|S=1}}

\newcommand{\PXp}{\tilde{P}_0}

\newcommand{\M}{\mathsf{M}(\PXa)}

\newcommand{\W}{P_{\hat{Y}|X}}

\newcommand{\RNum}[1]{\uppercase\expandafter{\romannumeral #1\relax}}

\newcommand{\gitrepo}[0]{\hyperlink{http://github.com/ustunb/ctfdist}{http://github.com/ustunb/ctfdist}}

\newcommand{\cell}[2]{\setlength{\tabcolsep}{0pt}\begin{tabular}{#1}#2 \end{tabular}}

\newcommand{\sccell}[2]{\setlength{\tabcolsep}{0pt}\textsc{\begin{tabular}{#1}#2 \end{tabular}}}

\usepackage{eqnarray,mathtools}

\icmltitlerunning{Avoiding Disparate Impact with Counterfactual Distributions}

\begin{document}

\twocolumn[
\icmltitle{Repairing without Retraining:\\Avoiding Disparate Impact with Counterfactual Distributions}

\icmlsetsymbol{equal}{*}

\begin{icmlauthorlist}
\icmlauthor{Hao Wang}{seas}
\icmlauthor{Berk Ustun}{seas}
\icmlauthor{Flavio P. Calmon}{seas}
\end{icmlauthorlist}

\icmlaffiliation{seas}{Harvard University, MA, USA}

\icmlcorrespondingauthor{Hao Wang}{hao\_wang@g.harvard.edu}
\icmlcorrespondingauthor{Berk Ustun}{berk@seas.harvard.edu}
\icmlcorrespondingauthor{Flavio P. Calmon}{flavio@seas.harvard.edu}

\icmlkeywords{fairness, classification, disparate impact, preprocessing, optimal transport, information theory}

\vskip 0.1in
]

\printAffiliationsAndNotice{}

\begin{abstract}
When the performance of a machine learning model varies over groups defined by sensitive attributes (e.g., gender or ethnicity), the performance  disparity can be expressed in terms of the probability distributions of the input and output variables over each group.
In this paper, we exploit this fact to reduce the disparate impact of a fixed classification model over a population of interest.
Given a black-box classifier, we aim to eliminate the performance gap by perturbing the distribution of input variables for the disadvantaged group.
We refer to the perturbed distribution as a \emph{counterfactual distribution}, and characterize its properties for common fairness criteria.
We introduce a descent algorithm to learn a counterfactual distribution from data. 
We then discuss how the estimated distribution can be used to build a data preprocessor that can reduce disparate impact without training a new model. 
We validate our approach through experiments on real-world datasets, showing that it can repair different forms of disparity without a significant drop in accuracy.
\end{abstract}

\section{Introduction}
\label{Sec::Introduction}

A machine learning model has \emph{disparate impact} when its performance changes across groups defined by \emph{sensitive attributes} such as race or gender \citep{barocas2016disparate}. Recent work has shown that models can exhibit significant performance disparities between groups \citep[see e.g.][]{angwin2016machine,dastin2018amazon}. Such disparities have led to a plethora of research on fair machine learning, focusing on how disparate impact arises \citep[][]{chen2018my,datta2016algorithmic}, how it can be measured \citep{vzliobaite2017measuring,pierson2017fast,simoiu2017problem,kilbertus2017avoiding, kusner2017counterfactual,galhotra2017fairness}, and how it can be mitigated \citep[][]{feldman2015certifying,corbett2017algorithmic,zafar2017parity, calmon2017optimized,menon2018cost,canetti2019soft}. Despite these developments, disparate impact remains difficult to avoid in a large class of real-world applications where:
\begin{itemize}[align=left]

\item  Models are procured from a third-party vendor who has the data or technical expertise required for model development \citep{guszcza2018why}.

\item  Models are deployed on a population where the data distribution differs from the distribution of the training data \citep[i.e., due to dataset shift,][]{sugiyama2017dataset}. 

\end{itemize}
In such settings, disparate impact is challenging to address, let alone understand. Users typically have black-box access to the model (e.g., via a prediction API), may not have access to the training data (e.g., due to privacy concerns or intellectual property rights), and may not be able to draw conclusions from the training data (e.g., due to distributional shifts in deployment). 

In this paper, we aim to mitigate disparate impact in such settings. Our object of interest is a hypothetical distribution of input variables that minimizes disparate impact in the model's \emph{deployment population}. We refer to this distribution as a \emph{counterfactual distribution}. As we will show, an information-theoretic analysis of counterfactual distributions has much to offer. Given a fixed classifier, disparate impact can be expressed in terms the distributions of input and output variables between groups (cf. Figure~\ref{Fig::ProbailitySimplexNoDiscrimination}). In turn, a counterfactual distribution can be obtained by repeatedly perturbing the distribution of input variables until a specific measure of disparity is minimized over the deployment population. The counterfactual distribution can then be used to repair the model so that it no longer exhibits disparate impact in deployment.

\begin{figure}[t]
\centering
\includegraphics[width=\linewidth]{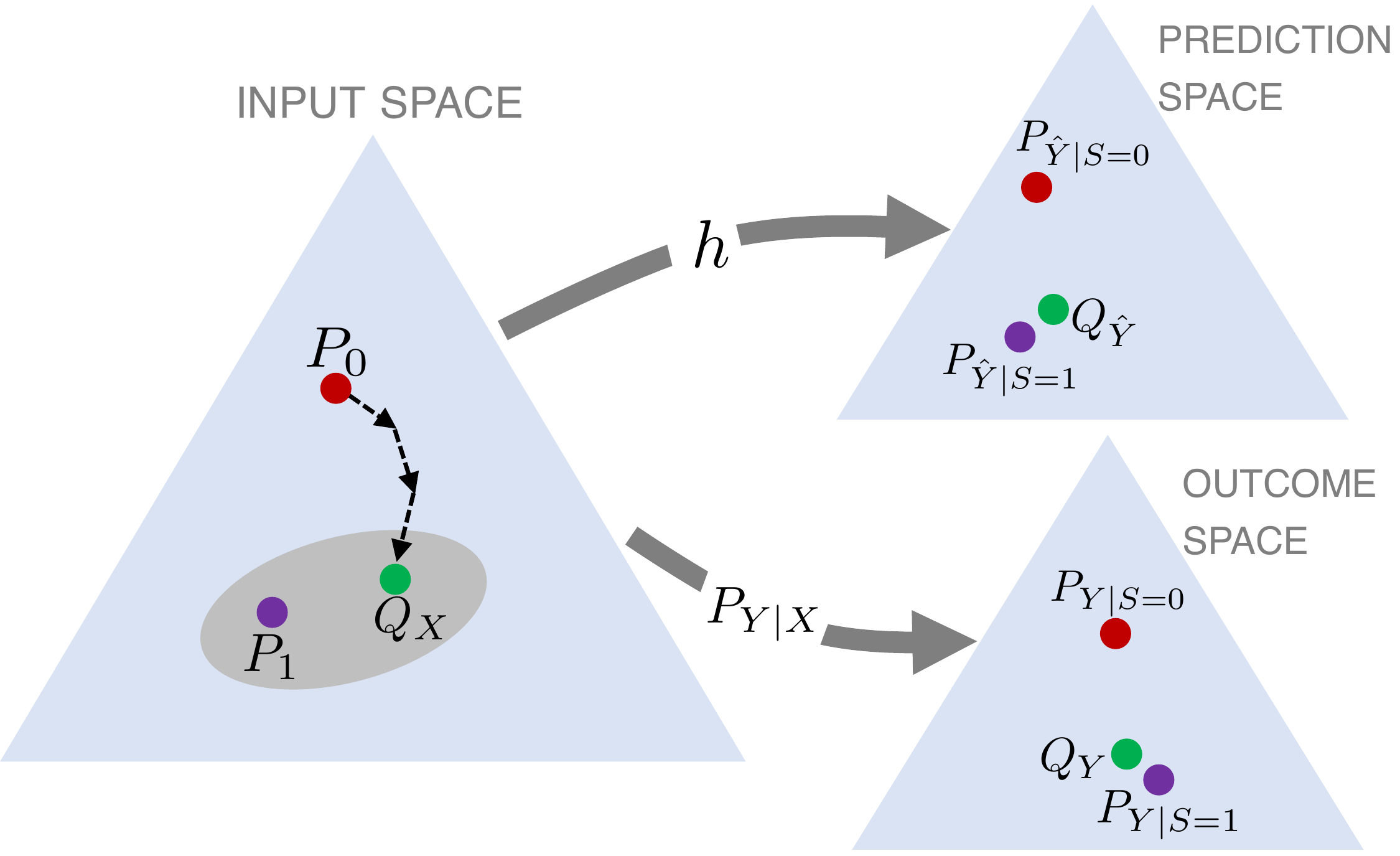}
\caption{Illustration of probability distributions affecting the disparate impact of a fixed classification model $h$. Here, $\PXa$ and $\PXb$ denote the distributions of input variables for groups where $S=0$ and $S = 1$, respectively. Disparate impact is a function of the distributions of predicted outcomes ($P_{\hat{Y}|S=0}$, $P_{\hat{Y}|S=1}$) and true outcomes ($P_{Y|S=0}$, $P_{Y|S=1}$). A \emph{counterfactual distribution} $Q_X$ is a perturbation of $P_0$ that minimizes a specific measure of disparity. If disparate impact persists under $Q_X$, there may be irreconcilable differences between the groups (i.e., $P_{Y|X,S=0}\neq P_{Y|X,S=1}$, see Prop.~\ref{prop::ctf_posi_dist_diff}). The counterfactual distribution may not be unique, as illustrated by the shaded ellipse.}
\label{Fig::ProbailitySimplexNoDiscrimination}
\end{figure}

\paragraph{Contributions}

The main contributions of this paper are:
\begin{enumerate}

\item We introduce a theoretical framework to mitigate performance disparities for a black-box classifiers in deployment.

\item We develop machinery to learn counterfactual distributions from data. Our tools recover a counterfactual distribution using a descent procedure in the simplex of probability distributions. We prove that influence functions can be used to compute a gradient in this setting, and derive closed-form estimators that enable efficient computation of influence functions for several (group) fairness criteria. 

\item We design pre-processing methods that use counterfactual distributions to repair a black-box classifier in a deployment population without the need to train a new model. The proposed method only affects individuals in a specific group in a way that improves their outcomes (on average).

\item We validate our procedure by repairing classifiers trained with real-world datasets. Our results demonstrate how counterfactual distributions can help mitigate disparate impact in real-world applications.
\end{enumerate}

\paragraph{Use Cases}

Our approach provides a way to repair classifiers in domains where treatment disparity is legal and ethical\footnote{Treatment disparity is not illegal nor unethical in all applications of machine learning where fairness is important. In medicine, for example, it is acceptable to use sensitive attributes in prediction. In lending, the Equal Credit Opportunity Act permits the use of age in credit scoring models.}. 
Illustrative use cases include:
\begin{itemize}

    \item A hospital purchases a readmissions prediction model that satisfies equal opportunity for different genders in the training data but violates it in deployment;
    
    \item  A rural clinic purchases a classification model to detect bone fractures in x-rays and discovers that patients with a certain physical trait have high FPR; 
    
    \item A bank enters a new market and discovers its credit score underperforms on customers over 60 years of age.

\end{itemize}
The tools in this paper will be able to scrutinize and repair performance disparities in all three settings --- regardless of whether the model directly uses the sensitive attribute. More importantly, they will be able to achieve parity-based notions of fairness in a way that: (i) only affects one group; (ii) benefits the group it affects (on average); and (iii) incentivizes individuals to reveal their sensitive attributes at prediction time. The latter two points (i.e., do-no-harm and opt-in) are important elements of ethical treatment disparity \citep[see e.g.,][]{lipton2018does, ustun2019fairness}.

\paragraph{Related Work}

We develop a theoretical framework that is used to design methods to determine counterfactual distributions in practice. We then use counterfactual distributions to design optimal transport-based pre-processing methods for ensuring fairness. In this regard, the closest work to ours are those of \citet{feldman2015certifying,johndrow2017algorithm,del2018obtaining}, which propose methods to control specific disparate impact metrics via optimal transport. These methods differ from ours in that they (i) focus on reducing measures of disparity related to predicted outcomes; (ii) map the input variable distributions across \emph{all} sensitive groups to a common distribution. More broadly, our approach differs from other model-agnostic approaches to mitigate disparate impact \citep[e.g., pre-processing methods such as][]{kamiran2012data,calmon2017optimized} in that it does not require access to the training data, and does not require training a new model. 

The term ``counterfactual distribution'' is often used to describe different kinds of hypothetical effects. In the statistics and economics literature \citep[see e.g.,][]{balke1994counterfactual,dinardo1995labor,rubin2005causal,fortin2011decomposition,chernozhukov2013inference,johansson2016learning,peters2017elements,fisher2018visually}, a counterfactual distribution refers to a hypothetical distribution of an \textit{outcome variable} given a specific distribution of input variables (e.g., the distribution of wages (outcome variable) for young workers if young workers had the same qualifications as older workers). The counterfactual distribution in this work describes a different kind of effect --- i.e., a distribution of input variables to minimize disparate impact --- and, consequently, must be derived using a different set of tools.

\paragraph{Additional Resources}

We provide a software implementation of our tools at \gitrepo{}. This paper extends work that was first presented at the NeurIPS WESGAI Workshop \citep{wang2018nips}.

\section{Framework}
\label{Sec::Framework}

In this section, we formally define counterfactual distributions and discuss their properties.

\paragraph{Preliminaries}

We consider a standard classification task where the goal is to predict a binary outcome variable $Y \in \{0,1\}$ using a vector of input variables $X=(X_1,\dots,X_d)\in\mathcal{X}$ drawn from the probability distribution $P_X.$ We are given a black-box classifier $h: \mathcal{X} \to [0,1]$. We assume that $h(\bx) \in \{0,1\}$ if the classifier outputs a predicted outcome (e.g., SVM) and $h(\bx) \in [0,1]$ if it outputs a predicted probability (e.g., logistic regression).

We evaluate differences in the performance of the classifier with respect to a \emph{sensitive attribute} $S \in \{0,1\}$ with distribution $P_S$. We refer to the subset of individuals with $S = 0$ and $S = 1$ as the \emph{target} and \emph{baseline} groups, respectively. We denote their distributions of input variables as $\PXa\defined P_{X|S=0},$ and $\PXb \defined P_{X|S=1}$. Likewise, we let $P_{\hat{Y}|S=0}(1) \defined \EE{h(X)|S=0}$ and $ P_{\hat{Y}|S=1}(1) \defined\EE{h(X)|S=1}.$  For the sake of clarity, we assume that $S$ is not an input variable to the model. Note, however, that our tools and results can be extended to settings where $S$ an input variable for the classifier.

\paragraph{Disparity Metrics}

We measure the performance disparity between groups in terms of a \emph{disparity metric}. 
Formally, a disparity metric is a  mapping $\mathsf{M}: \mathcal{P} \to \Reals$ where $\mathcal{P}$ is the set of probability distributions over $\mathcal{X}$.
We provide examples of $\M$ for common fairness criteria in Table~\ref{tabel:ExpDiscMet}. Note that we write disparity metrics as $\M$ since they can be expressed as a function of $\PXa$ once the classifier and the distributions $P_{Y|X,S}$, $\PXb$, and $P_S$ are fixed.%
\footnote{As we show in Appendix~\ref{append::fact_metrics}, the disparity metric can be expressed as a function of $\PXa$ once the following objects are fixed: $h$, the classifier; $P_{Y|X,S}$ the distribution of the true outcomes given input variables and sensitive attribute; $\PXb$, the distribution of input variables over the baseline group; and $P_S$ the distribution of the sensitive attribute.}

\begin{table*}[t]
\small
\centering
\renewcommand{\arraystretch}{1.25}
\begin{tabular}{lll}

\toprule
\textsc{Performance Metric} & \textsc{Acronym} & \textsc{Disparity Metric} \\
\toprule  

Statistical Parity &
SP &
$\Pr(\hat{Y}=0|S=0)-\Pr(\hat{Y}=0|S=1)$ 

\\ \midrule 

False Discovery Rate & FDR & $\Pr(Y=0|\hat{Y}=1,S=0)-\Pr(Y=0|\hat{Y}=1,S=1)$

\\ \midrule 

False Negative Rate & FNR & $\Pr(\hat{Y}=0|Y=1,S=0)-\Pr(\hat{Y}=0|Y=1,S=1)$ 
\\ \midrule

False Positive Rate & FPR & $\Pr(\hat{Y}=1|Y=0,S=0)-\Pr(\hat{Y}=1|Y=0,S=1)$\\

\midrule 

Distribution Alignment & $\textrm{DA}_{\lambda}$ &  $\KL(\PYhata\|\PYhatb)+\lambda\KL(\PXa\|\PXb)$ \\ 

\bottomrule 
\end{tabular}
\caption{Disparity metrics $\M$ for common fairness criteria \citep[see e.g.,][for a list]{romei2014multidisciplinary}. We assume that $S = 0$ attains the less favorable value of performance so that $\M\geq 0$. Our tools can be used for any fairness criterion that can be expressed as a convex combination of these metrics (e.g., equalized odds; see Proposition \ref{thm::Linearility_Inf_Cor}). \emph{Distribution Alignment} is a metric proposed in \citet{wang2018influence} that measures the disparity of predicted outcomes via the KL-divergence.}
\label{tabel:ExpDiscMet}
\end{table*}

\paragraph{Counterfactual Distributions}

A \emph{counterfactual distribution} is a hypothetical probability distribution of input variables for the target group that minimizes a specific disparity metric.
\begin{defn}
\label{defn::CounterfactualDist}
A counterfactual distribution $Q_X$ is a distribution of input variables for the target group such that:
\begin{align}
Q_X \in \argmin_{Q'_X\in\mathcal{P}} \left|\mathsf{M}(Q'_X)\right|, \label{Eq::CounterfactualSet}
\end{align}
where $\mathsf{M}(\cdot)$ is a given disparity metric and $\mathcal{P}$ is the set of probability distributions over $\mathcal{X}$.
\end{defn}

There exist several ways to resolve the performance disparity of a fixed classifier by perturbing the distributions of input variables of sensitive groups. For example, one could simultaneously perturb the input distributions for all groups to a ``midpoint'' distribution \citep[see e.g., the distributions considered by][to achieve statistical parity]{feldman2015certifying,johndrow2017algorithm,del2018obtaining}. 

While our tools could recover such distributions, we will purposely consider a counterfactual distribution that alters the input variables for a sensitive group that attains the less favorable performance (i.e., the target group $S = 0$). This choice reflects our desire to resolve the performance disparity by having the target group perform better, rather than having the baseline group perform worse. As we discuss later, this choice reduces the data requirements to estimate the counterfactual distribution and the individuals who are affected by the repair (i.e., this approach only produces a preprocessor that affects individuals where $S = 0$).

At this point, an observant reader may wonder why a counterfactual distribution for the target group is not simply the distribution of input variables over the baseline group (i.e., $Q_X  \equiv \PXb$). In fact, the distribution of input variables for the baseline group $\PXb$ \emph{is not necessarily} a counterfactual distribution when $P_{Y|X,S=0}\neq P_{Y|X,S=1}$. We illustrate this point with the following example.

\begin{example}
\label{Example::MajorityIsNotCTF}
Consider a classification task where the input variables $X=(X_1,X_2)\in\{0,1\}^2$ are drawn from distributions such that $P_{X|S=s} = P_{X_1|S=s} \cdot P_{X_2| S=s}$ for $s \in \{0,1\}$ where:
\begin{align*}
\Pr(X_1 = 1 | S = 0) = 0.9, \; & \Pr(X_2 = 1 | S = 0) = 0.2,\\
\Pr(X_1 = 1 | S = 1) = 0.1, \; & \Pr(X_2 = 1 | S = 1) = 0.5.
\end{align*}  
Assume that the true outcome variables $Y$ are drawn from the conditional distributions:
\begin{align}
    \label{eq:defns_ex1}
    \begin{split}
    P_{Y|X,S=0}(1|\bx) &= \mathsf{logistic}(2x_1-2x_2),\\ 
    P_{Y|X,S=1}(1|\bx) &= \mathsf{logistic}(2x_1+4x_2-3).
    \end{split}
\end{align}
In this case, the Bayes optimal classifier for $S = 1$ is $h(\bx) = \indicator[x_2=1]$. Using the difference in FPR as the disparity metric, $h$ achieves $\mathsf{M}(\PXa) = 25.1\%.$ In this case, setting $\PXa \gets \PXb$ would achieve a disparity of  $\mathsf{M}(\PXb)=43.6\%$. In contrast, we can achieve a disparity metric of $\mathsf{M}(Q_X)=0.0\%$ for a counterfactual distribution such that 
\begin{align*}
Q_X(0,0) &= 0.50, \qquad & Q_X(0,1) &= 0.09,\\ 
Q_X(1,0) &= 0.41, \qquad & Q_X(1,1) &= 0.00.
\end{align*}
\end{example}

Example \ref{Example::MajorityIsNotCTF} shows that counterfactual distributions may be non-trivial when the conditional distributions of $Y$ given $X$ differ across groups (i.e., $P_{Y|X,S=0}\neq P_{Y|X,S=1})$. In particular, the condition  $P_{Y|X,S=0}\neq P_{Y|X,S=1}$ will always hold whenever counterfactual distributions do not completely eliminate the disparity between groups. We formalize this statement in the next proposition.
\begin{prop}
\label{prop::ctf_posi_dist_diff}
If $\mathsf{M}(Q_X)>0$ where $Q_X$ is a counterfactual distribution for a disparity metric in Table~\ref{tabel:ExpDiscMet}, then $P_{Y|X,S=0} \neq P_{Y|X,S=1}$.
\end{prop}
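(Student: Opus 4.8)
The plan is to prove the contrapositive: assuming $P_{Y|X,S=0}=P_{Y|X,S=1}$, I will show that $\min_{Q'_X\in\mathcal{P}}|\mathsf{M}(Q'_X)|=0$. Granting this, every counterfactual distribution $Q_X$ (a minimizer in Definition~\ref{defn::CounterfactualDist}) satisfies $|\mathsf{M}(Q_X)|=0$, hence $\mathsf{M}(Q_X)=0$, contradicting $\mathsf{M}(Q_X)>0$; with the convention $\mathsf{M}\ge 0$ of Table~\ref{tabel:ExpDiscMet} this zero value is also the global minimum. It therefore suffices to exhibit a \emph{single} $Q'_X\in\mathcal{P}$ with $\mathsf{M}(Q'_X)=0$.

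First I would take the candidate $Q'_X=\PXb$, i.e.\ replace the target group's input distribution by that of the baseline group, and recall (cf.\ the footnote after Table~\ref{tabel:ExpDiscMet} and Appendix~\ref{append::fact_metrics}) that evaluating $\mathsf{M}(Q'_X)$ recomputes the $S=0$ performance quantities with $Q'_X$ in the role of $\PXa$, keeping $h$, $P_{Y|X,S}$, $\PXb$ and $P_S$ fixed. The key observation is structural: the predicted label $\hat Y$ is produced from $X$ alone --- as $h(X)$ when $h(\bx)\in\{0,1\}$, or as a Bernoulli$(h(\bx))$ draw when $h(\bx)\in[0,1]$ --- so within either group the joint law of $(X,Y,\hat Y)$ factors as (input distribution)$\,\times\,P_{Y|X,S}\,\times\,P_{\hat Y|X}$. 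Under the hypothesis $P_{Y|X,S=0}=P_{Y|X,S=1}$ and the choice $Q'_X=\PXb$, the (perturbed) target-group joint law of $(X,Y,\hat Y)$ thus coincides exactly with the baseline-group joint law.

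Next I would check the table entry by entry against this equality. The entries for SP, FDR, FNR and FPR are each a difference between a conditional probability of $\hat Y$ and/or $Y$ for $S=0$ and the identical conditional probability for $S=1$; since the two within-group joint laws agree (and assuming the relevant conditioning events have positive probability, as is implicit throughout), each such difference is $0$. For $\textrm{DA}_\lambda$, the term $\lambda\KL(Q'_X\|\PXb)=\lambda\KL(\PXb\|\PXb)=0$, while $\PYhata$ evaluated at $Q'_X=\PXb$ is the law of $h(X)$ for $X\sim\PXb$, which is exactly $\PYhatb$, so $\KL(\PYhata\|\PYhatb)=0$ too; hence $\mathsf{M}(Q'_X)=0$. (For criteria that are convex combinations of table entries --- e.g.\ equalized odds, cf.\ Proposition~\ref{thm::Linearility_Inf_Cor} --- a convex combination of zeros is again zero.) This gives $\mathsf{M}(Q'_X)=0$ in all cases, completing the contrapositive.

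I do not expect a real obstacle. The one step requiring care is the bookkeeping just above: confirming that each listed metric depends on the target group only through the within-group joint distribution of $(X,Y,\hat Y)$, so that matching that joint law to the baseline group's forces the disparity to vanish. This is precisely the reduction that lets one write each disparity metric as a function of $\PXa$ alone, so it can be imported directly; with that in hand, the ``collapse to the baseline group'' is immediate.
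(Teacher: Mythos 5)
Your proposal is correct and follows essentially the same route as the paper's proof: both arguments reduce to checking that under $P_{Y|X,S=0}=P_{Y|X,S=1}$ the candidate $Q'_X=\PXb$ makes the two groups' within-group joint laws of $(X,Y,\hat Y)$ coincide, forcing $\mathsf{M}(\PXb)=0$ and hence $\mathsf{M}(Q_X)=0$ for any minimizer. The only cosmetic difference is that you argue by contrapositive while the paper phrases it as a contradiction via $|\mathsf{M}(\PXb)|\ge\mathsf{M}(Q_X)>0$, and you spell out the DA case explicitly where the paper dispatches it with a one-line remark.
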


Proposition~\ref{prop::ctf_posi_dist_diff} illustrates how a counterfactual distribution can be used to detect cases where a classifier exhibits an irreconcilable performance disparity between groups  -- i.e., a disparity that cannot be resolved by perturbing the distributions of input variables for the target group. The result complements various impossibility results on inevitable trade-offs between groups \citep[see e.g.,][]{chouldechova2017fair,kleinberg2016inherent,pleiss2017fairness}. It also provides a sufficient condition that can inform the need for treatment disparity \citep[see e.g., of][]{kleinberg2018algorithmic,dwork2018decoupled,lipton2018does,ustun2019fairness}.

\section{Methodology}
\label{Sec::Methodology}

In this section, we present information-theoretic tools to learn counterfactual distributions from data. We first demonstrate how influence functions provide a natural ``descent direction'' in this setting. Next, we derive closed-form expressions for the influence functions of the disparity metrics in Table \ref{tabel:ExpDiscMet}. Lastly, we present a descent procedure that combines these results to learn a counterfactual distribution for the deployment population. 

\subsection{Measuring the Descent Direction} 

In what follows, we describe how to reduce the value of a disparity metric by perturbing the distribution of input variables over the target group $\PXa$. 

We start by formally defining the local perturbation of an input distribution.
\begin{defn}
\label{Def:perturb}
The perturbed distribution $\PXp$ over the target group ($S=0$) is given by 
\begin{align}
\label{Eq::Perturb_dist}
\PXp(\bx)\defined \PXa(\bx)(1+\epsilon f(\bx)),\quad \forall \bx\in\mathcal{X}
\end{align}
where $f: \mathcal{X}\to\Reals$ is a perturbation function from the class of all functions with zero mean and unit variance w.r.t. $P_0$, and $\epsilon>0$ is a positive scaling constant chosen so that $\PXp$ is a valid probability distribution.
\end{defn}

Here, $f(\bx)$ represents a direction in the probability simplex while $\epsilon$ represents the magnitude of perturbation \citep[see e.g.,][for other  applications of local perturbations of measures in information theory]{borade2008euclidean,anantharam2013maximal,huang2014efficient}. 

As we will see shortly, the direction of steepest descent for disparate impact can be measured using an \emph{influence function} \citep[see e.g.,][for other uses in machine learning]{huber2011robust,koh2017understanding}.
\begin{defn}
\label{defn::inf_func}
For a disparity metric $\mathsf{M}(\cdot)$,  the influence function $\psi: \mathcal{X}\to \Reals$ is given by
\begin{align}
\label{eq::defnInfluence}
    \psi(\bx)
    \defined \lim_{\epsilon\to 0}\frac{\mathsf{M}\left((1-\epsilon)\PXa+\epsilon \delta_{\bx}\right) -\mathsf{M}(\PXa)}{\epsilon}
\end{align}
where $\delta_{\bx}(\bm{z}) = \indicator[\bm{z} = \bx]$ is the delta function at $\bx$.
\end{defn}
Intuitively, given a sufficiently large dataset from the deployment population, the influence function approximates the change in a disparity metric when a sample $\bx \in \mathcal{X}$ from the target group is removed (or added) to the dataset.

In Proposition \ref{thm::InfFunc_Corr_Func}, we show that perturbing the distribution $\PXa$ along the direction defined by $-\psi(\bx)$ produces the largest local decrease of the disparity metric. That is, $-\psi(\bx)$ reflects the direction of steepest descent in disparate impact. 

\begin{prop}
\label{thm::InfFunc_Corr_Func}
Given a disparity metric $\mathsf{M}(\cdot)$, we have that
\begingroup
\small
\begin{equation}
\label{equa::corrFuncInfFunc}
\begin{aligned}
    \argmin_{f(\bx)}\lim_{\epsilon \to 0} \frac{\mathsf{M}(\PXp) - \mathsf{M}(\PXa)}{\epsilon}
    =\frac{-\psi(\bx)}{\sqrt{\EE{\psi(X)^2|S=0}}},
\end{aligned}
\end{equation}
\endgroup
for any influence function $\psi:\mathcal{X}\to \Reals$ such that $\EE{\psi(X)^2|S=0}\neq 0$.
\end{prop}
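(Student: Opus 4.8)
The plan is to compute the limit on the left-hand side of \eqref{equa::corrFuncInfFunc} explicitly as a linear functional of the perturbation direction $f$, and then recognize the minimization over $f$ (subject to the normalization constraints in Definition~\ref{Def:perturb}) as a constrained inner-product problem whose solution is given by Cauchy--Schwarz. First I would unfold the definition of $\PXp$ in \eqref{Eq::Perturb_dist}. Writing $\PXp = \PXa + \epsilon \, \PXa f$ (viewing $\PXa f$ as the signed measure $\bx \mapsto \PXa(\bx) f(\bx)$), I would argue that, because $\mathsf{M}$ depends on $\PXa$ through a fixed smooth map (holding $h$, $P_{Y|X,S}$, $\PXb$, $P_S$ fixed, as in the footnote to Table~\ref{tabel:ExpDiscMet}), the first-order expansion gives
\begin{equation*}
\lim_{\epsilon\to 0}\frac{\mathsf{M}(\PXp)-\mathsf{M}(\PXa)}{\epsilon} = \sum_{\bx\in\mathcal{X}} \PXa(\bx) f(\bx)\,\psi(\bx) = \EE{f(X)\psi(X)\mid S=0}.
\end{equation*}

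The key step linking this to the influence function is to show the directional derivative of $\mathsf{M}$ in the direction of the signed measure $\PXa f$ equals $\sum_{\bx} \PXa(\bx) f(\bx) \psi(\bx)$. Here $\psi$, by Definition~\ref{defn::inf_func}, is the directional derivative along $\delta_{\bx} - \PXa$. The cleanest route: note that $\PXa f$ has total mass $\sum_\bx \PXa(\bx) f(\bx) = \EE{f(X)\mid S=0} = 0$ (the zero-mean condition), so $\PXa f = \sum_\bx \PXa(\bx) f(\bx)\,(\delta_\bx - \PXa)$ as signed measures, since the $-\PXa$ terms cancel. By linearity of the Gateaux derivative of $\mathsf{M}$ (which I would justify from the explicit closed-forms derived later for the metrics of Table~\ref{tabel:ExpDiscMet}, each being a smooth function of finitely many expectations that are linear in $\PXa$), the derivative along $\PXa f$ is $\sum_\bx \PXa(\bx) f(\bx) \,\psi(\bx)$, exactly $\EE{f(X)\psi(X)\mid S=0}$.

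It then remains to minimize $\EE{f(X)\psi(X)\mid S=0}$ over all $f$ with $\EE{f(X)\mid S=0}=0$ and $\EE{f(X)^2\mid S=0}=1$. Since $\EE{\psi(X)\mid S=0}=0$ as well (it is the derivative of $\mathsf{M}$ along $\PXa - \PXa = 0$; more directly, averaging Definition~\ref{defn::inf_func} against $\PXa$ gives the derivative in the null direction, which is $0$), the zero-mean constraint on $f$ is automatically compatible, and by Cauchy--Schwarz in $L^2(P_0)$ we have $\EE{f\psi\mid S=0}\geq -\sqrt{\EE{f^2\mid S=0}}\sqrt{\EE{\psi^2\mid S=0}} = -\sqrt{\EE{\psi(X)^2\mid S=0}}$, with equality iff $f = -\psi/\sqrt{\EE{\psi(X)^2\mid S=0}}$, which also satisfies both constraints. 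This yields \eqref{equa::corrFuncInfFunc}.

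The main obstacle I anticipate is the interchange of the $\epsilon \to 0$ limit with the sum and the justification that $\mathsf{M}$ is Gateaux-differentiable with the asserted linear derivative — in other words, that the ``first-order expansion'' step is legitimate and that the derivative along the composite direction $\PXa f$ decomposes as claimed. I would handle this by leaning on the explicit formulas for $\psi$ derived for each metric in Table~\ref{tabel:ExpDiscMet} in the next subsection (each $\mathsf{M}$ is a $C^1$ function of a finite vector of $\PXa$-expectations, so the chain rule applies and $\epsilon \mapsto \mathsf{M}(\PXa + \epsilon\,\PXa f)$ is differentiable at $0$ with derivative $\langle \nabla, \PXa f\rangle$), rather than trying to prove differentiability for an abstract $\mathsf{M}$. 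A minor secondary point is checking that the chosen optimal $f$ is admissible in the sense of Definition~\ref{Def:perturb} (zero mean, unit variance, and an $\epsilon>0$ exists keeping $\PXp$ a valid distribution), which follows immediately from $\psi$ having zero mean and finite positive variance.
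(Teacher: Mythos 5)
Your proposal is correct and follows essentially the same route as the paper's proof: both reduce the directional derivative to $\EE{f(X)\psi(X)\mid S=0}$ by exploiting linearity of the G\^ateaux derivative together with the zero-mean property of $f$, and both conclude via Cauchy--Schwarz with equality at $f=-\psi/\sqrt{\EE{\psi(X)^2\mid S=0}}$. The only (cosmetic) difference is bookkeeping: the paper rescales $f$ by $M_U=\sup|f|+1$ so that the perturbation can be written as a mixture $(1-\epsilon)\PXa+\epsilon Q$ with $Q$ a genuine probability distribution, whereas you decompose the signed measure $\PXa f$ directly over the directions $\delta_{\bx}-\PXa$.
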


Proposition \ref{thm::Linearility_Inf_Cor} shows that when disparity is measured using a linear combination of metrics, the influence function for the compound metric can be expressed as a linear combination of the influence functions for its components.
\begin{prop}
\label{thm::Linearility_Inf_Cor}
Given any convex combination of $K$ disparity metrics $\M = \sum_{i=1}^K\lambda_i\mathsf{M}_i(\PXa),$ the influence function of the compound disparity metric $\M$ has the form:
\begin{equation}
    \psi(\bx)=\sum_{i=1}^K\lambda_i\psi_i(\bx). 
\end{equation}
\end{prop}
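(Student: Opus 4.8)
The plan is to prove Proposition \ref{thm::Linearility_Inf_Cor} directly from the definition of the influence function (Definition \ref{defn::inf_func}), exploiting the fact that the map $\mathsf{M} \mapsto \psi$ defined by \eqref{eq::defnInfluence} is linear whenever the limit exists. Concretely, write $\mathsf{M} = \sum_{i=1}^K \lambda_i \mathsf{M}_i$ and substitute into the difference quotient:
\begin{equation*}
\frac{\mathsf{M}\left((1-\epsilon)\PXa + \epsilon\delta_{\bx}\right) - \mathsf{M}(\PXa)}{\epsilon}
= \sum_{i=1}^K \lambda_i \cdot \frac{\mathsf{M}_i\left((1-\epsilon)\PXa + \epsilon\delta_{\bx}\right) - \mathsf{M}_i(\PXa)}{\epsilon}.
\end{equation*}
Since $K$ is finite and each component limit $\lim_{\epsilon\to 0}\bigl[\mathsf{M}_i((1-\epsilon)\PXa + \epsilon\delta_{\bx}) - \mathsf{M}_i(\PXa)\bigr]/\epsilon = \psi_i(\bx)$ exists by hypothesis (each $\mathsf{M}_i$ is one of the metrics in Table~\ref{tabel:ExpDiscMet} for which the influence function is well-defined), the limit of the sum equals the sum of the limits, giving $\psi(\bx) = \sum_{i=1}^K \lambda_i \psi_i(\bx)$.

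The only subtlety worth spelling out is that the argument of $\mathsf{M}$ inside the difference quotient, namely $(1-\epsilon)\PXa + \epsilon\delta_{\bx}$, is identical for every component metric $\mathsf{M}_i$ — this is what lets us pull the sum outside before taking the limit. I would note that convexity of the combination (i.e., $\lambda_i \geq 0$, $\sum_i \lambda_i = 1$) is not actually needed for the algebraic identity; the result holds for any \emph{linear} combination of metrics whose individual influence functions exist. The "convex combination" phrasing in the statement is presumably chosen because that is the case of interest for fairness criteria (e.g., equalized odds as an average of FPR and FNR disparities), and because it guarantees $\mathsf{M}$ remains a sensible disparity metric.

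I do not anticipate a genuine obstacle here — the proposition is essentially a one-line consequence of linearity of limits. The one place to be slightly careful is ensuring that the hypotheses guarantee each $\psi_i$ is finite (not $\pm\infty$) so that the sum $\sum_i \lambda_i \psi_i(\bx)$ is meaningful and the interchange of limit and finite sum is valid; this is covered by the standing assumption that the $\mathsf{M}_i$ are drawn from Table~\ref{tabel:ExpDiscMet}, for which closed-form influence functions are derived elsewhere in the paper. If desired, one could alternatively invoke Proposition~\ref{thm::InfFunc_Corr_Func} and its connection between $\psi$ and the steepest-descent perturbation function, but the direct computation from Definition~\ref{defn::inf_func} is cleaner and self-contained.
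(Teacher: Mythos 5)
Your proof is correct and is essentially identical to the paper's: both substitute $\mathsf{M}=\sum_i\lambda_i\mathsf{M}_i$ into the difference quotient of Definition~\ref{defn::inf_func} and use linearity of limits over a finite sum. Your added remarks (that convexity is not needed and that each $\psi_i$ must be finite) are sound but not part of the paper's one-line argument.
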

Proposition \ref{thm::Linearility_Inf_Cor} allows us to consider a larger class of disparity measures than those shown in Table \ref{tabel:ExpDiscMet}. For instance, one can recover a counterfactual distribution to achieve equalized odds \citep{hardt2016equality} by using a convex combination of influence functions for FPR and FNR.

\subsection{Computing Influence Functions}

\newcommand{\shat}[1]{\hat{s}(#1)}
\newcommand{\yhat}[1]{\hat{y}_0(#1)}

We now present closed-form expressions for the influence functions of disparity metrics shown in Table~\ref{tabel:ExpDiscMet}. The expressions will be cast in terms of three classifiers:

\begin{itemize}
\item $h(\bx)$: the black-box classifier that we aim to repair;

\item $\shat{\bx}$: a classifier that uses the same input variables as $h$, but aims to predict the probability that an individual belongs to the baseline group, $P_{S|X}(1|\bx)$.

\item $\yhat{\bx}$: a classifier that uses the same input variables as $h$, but aims to predict the true outcome \emph{for individuals in the target group}, $P_{Y|X, S = 0}(1|\bx)$.
 
\end{itemize}

Given $h(\bx)$, we would train $\shat{\bx}$ and $\yhat{\bx}$ using an \emph{auditing dataset} $\mathcal{D}^{\text{audit}}=\{(\bx_i,y_i,s_i)\}_{i=1}^n$ drawn from the deployment population. With these three models in hand, we can then compute influence functions using  closed-form expressions shown in the following proposition.

\begin{prop}
\label{prop::inf_func_exp}
The influence functions for the disparity metrics in Table~\ref{tabel:ExpDiscMet} can be expressed as
\begingroup
\small
\begin{align*}
\psi^{SP}(\bx) &= -h(\bx) + \hat{\mu}_0,\nonumber\\
\psi^{FDR}(\bx) &= \frac{ h(\bx)(1-\yhat{\bx})-\nu_{0,1}h(\bx)}{\hat{\mu}_0}, \nonumber \\
\psi^{FNR}(\bx) &= \frac{(1-h(\bx))\yhat{\bx}-\gamma_{0,1}\yhat{\bx}}{\mu_0}, \nonumber\\
\psi^{FPR}(\bx) &= \frac{h(\bx)(1-\yhat{\bx})-\gamma_{1,0}(1-\yhat{\bx})}{(1-\mu_0)}, \nonumber\\
\psi^{DA}(\bx) &= \log\frac{\hat{\mu}_0(1-\hat{\mu}_1)}{(1-\hat{\mu}_0)\hat{\mu}_1} h(\bx)+\lambda\log\frac{1-\shat{\bx}}{\shat{\bx}} \nonumber \\
&- \hat{\mu}_0\log\frac{\hat{\mu}_0(1-\hat{\mu}_1)}{(1-\hat{\mu}_0)\hat{\mu}_1} - \lambda\EE{\log\frac{1-\shat{X}}{\shat{X}}\Bigg|S=0}, \nonumber\\
\intertext{where $\mu_s$, $\hat{\mu}_s$, $\gamma_{a,b}$, and $\nu_{a,b}$ are constants such that}
\mu_s &\defined \Pr(Y=1|S=s), \nonumber \\
\hat{\mu}_s &\defined \Pr(\hat{Y}=1|S=s), \nonumber\\
\gamma_{a,b}&\defined \Pr(\hat{Y}=a|Y=b,S=0),\nonumber\\
\nu_{a,b}&\defined \Pr(Y=a|\hat{Y}=b,S=0).\nonumber
\end{align*}
\endgroup
\end{prop}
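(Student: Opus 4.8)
The plan is to compute each influence function directly from Definition~\ref{defn::inf_func} by plugging the mixture $P_\epsilon \defined (1-\epsilon)\PXa + \epsilon\delta_{\bx}$ into the expression for the corresponding disparity metric as a function of $\PXa$, differentiating with respect to $\epsilon$ at $\epsilon = 0$, and simplifying. The key preliminary step is to record how the relevant population quantities behave under the perturbation: for any bounded $g$, $\EEE{P_\epsilon}{g(X)} = (1-\epsilon)\EE{g(X)\mid S=0} + \epsilon g(\bx)$, so $\tfrac{d}{d\epsilon}\big|_{0}\EEE{P_\epsilon}{g(X)} = g(\bx) - \EE{g(X)\mid S=0}$. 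In particular, writing $\hat\mu_0 = \EE{h(X)\mid S=0}$, the perturbed quantity $\hat\mu_0^\epsilon$ has derivative $h(\bx)-\hat\mu_0$. Since $\PXb$, $P_S$, and $P_{Y|X,S}$ are held fixed (they do not depend on $\PXa$), every term in a disparity metric that involves only the baseline group is constant in $\epsilon$, which is what makes the derivatives clean.

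I would then handle the five metrics in increasing order of difficulty. For \textbf{SP}, $\mathsf M(\PXa) = \Pr(\hat Y=0\mid S=0) - \Pr(\hat Y=0\mid S=1) = (1-\hat\mu_0) - \text{const}$, so the derivative is $-(h(\bx)-\hat\mu_0) = -h(\bx)+\hat\mu_0$, giving $\psi^{SP}$ immediately. For \textbf{FPR}, \textbf{FNR}, and \textbf{FDR}, the metric is a difference of two conditional probabilities, each of which is a \emph{ratio} of two expectations over the target group (e.g.\ for FPR, $\Pr(\hat Y=1\mid Y=0,S=0) = \EE{h(X)(1-\hat y_0(X))\mid S=0}\,/\,\EE{1-\hat y_0(X)\mid S=0}$, using that $\EE{h(X)\mid X,S=0}=h(X)$ and $\Pr(Y=0\mid X,S=0) = 1-\hat y_0(\bx)$ by definition of $\hat y_0$, plus the tower property). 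The baseline-group term is constant, so by the quotient rule the derivative at $\epsilon=0$ is $\big(N'(0)D(0) - N(0)D'(0)\big)/D(0)^2$, where $N,D$ are the numerator and denominator expectations; substituting the single-term derivatives from the previous paragraph and identifying $D(0) = 1-\mu_0$, $N(0)/D(0) = \gamma_{1,0}$ recovers exactly the stated $\psi^{FPR}$, and analogously for $\psi^{FNR}$ (with $D(0)=\mu_0$, ratio $\gamma_{0,1}$) and $\psi^{FDR}$ (with $D(0)=\hat\mu_0$, ratio $\nu_{0,1}$). For \textbf{DA}, I would write $\KL(\PYhata\|\PYhatb)$ as a function of the Bernoulli parameter $\hat\mu_0$ only, namely $\hat\mu_0\log\tfrac{\hat\mu_0}{\hat\mu_1} + (1-\hat\mu_0)\log\tfrac{1-\hat\mu_0}{1-\hat\mu_1}$, differentiate to get $\log\tfrac{\hat\mu_0(1-\hat\mu_1)}{(1-\hat\mu_0)\hat\mu_1}\cdot(h(\bx)-\hat\mu_0)$; and write $\KL(\PXa\|\PXb) = \EE{\log\tfrac{\PXa(X)}{\PXb(X)}\mid S=0}$, whose derivative under the mixture perturbation is $\log\tfrac{\PXa(\bx)}{\PXb(\bx)} - \KL(\PXa\|\PXb)$. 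The last step is to re-express $\log\tfrac{\PXa(\bx)}{\PXb(\bx)}$ in terms of the group-membership classifier via Bayes' rule: $\tfrac{\PXa(\bx)}{\PXb(\bx)} = \tfrac{P_{S|X}(0\mid\bx)/P_{S|X}(1\mid\bx)}{P_S(0)/P_S(1)}$, i.e.\ $\log\tfrac{\PXa(\bx)}{\PXb(\bx)} = \log\tfrac{1-\hat s(\bx)}{\hat s(\bx)} + \text{const}$; the additive constant cancels against the mean term $\EE{\log\tfrac{1-\hat s(X)}{\hat s(X)}\mid S=0}$ up to the same constant, leaving precisely the stated $\psi^{DA}$ (the $\KL(\PXa\|\PXb)$ constant and the $P_S$ constant being absorbed since an influence function is only defined up to... actually one must check they cancel exactly, not just up to a constant).

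The main obstacle I anticipate is the \textbf{DA} case, specifically the bookkeeping of additive constants: an influence function as defined in \eqref{eq::defnInfluence} is a genuine function (not an equivalence class), so I must verify that the constant from $\log\tfrac{\PXa(\bx)}{\PXb(\bx)} = \log\tfrac{1-\hat s(\bx)}{\hat s(\bx)} - \log\tfrac{P_S(0)}{P_S(1)}$ and the constant $\KL(\PXa\|\PXb)$ and whatever arises from rewriting $\EE{\log\tfrac{\PXa(X)}{\PXb(X)}\mid S=0}$ all cancel to yield exactly $-\hat\mu_0\log\tfrac{\hat\mu_0(1-\hat\mu_1)}{(1-\hat\mu_0)\hat\mu_1} - \lambda\EE{\log\tfrac{1-\hat s(X)}{\hat s(X)}\mid S=0}$ with no leftover term; this requires carefully tracking that $\EE{\log\tfrac{\PXa(X)}{\PXb(X)}\mid S=0} = \EE{\log\tfrac{1-\hat s(X)}{\hat s(X)}\mid S=0} - \log\tfrac{P_S(0)}{P_S(1)}$ and that $-\log\tfrac{P_S(0)}{P_S(1)}\cdot 1 - (-\KL(\PXa\|\PXb))$ reduces correctly. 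A secondary, more mundane obstacle is justifying the interchange of limit/derivative and expectation (dominated convergence, using boundedness of $h,\hat s,\hat y_0$ away from $0$ and $1$ where logs appear) and confirming that the denominators $\mu_0, 1-\mu_0, \hat\mu_0, \hat\mu_1$ are strictly positive so the ratios are differentiable — I would state these as standing regularity assumptions. Everything else is the quotient rule applied to two-term expectation ratios, which is routine once the single-term derivative formula is in place.
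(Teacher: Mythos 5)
Your proposal is correct and follows essentially the same route as the paper's proof: plug the mixture $(1-\epsilon)\PXa+\epsilon\delta_{\bx}$ into each metric written as a function of $\PXa$ (a linear expectation for SP, a ratio of two target-group expectations for FDR/FNR/FPR handled by the quotient rule, and the two KL terms for DA), differentiate at $\epsilon=0$, and rewrite $\log\tfrac{\PXa(\bx)}{\PXb(\bx)}$ via Bayes' rule so that the $\log\tfrac{P_S(1)}{P_S(0)}$ constant cancels between the pointwise and expectation terms. The only cosmetic difference is that you differentiate the binary KL as a function of $\hat{\mu}_0$ by the chain rule where the paper Taylor-expands the logarithm directly, and you are more explicit about the regularity conditions; both yield the same expressions.
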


\subsection{Learning Counterfactual Distributions from Data}
\label{sec::bey_loc_pert}

\newcommand{\resample}[2]{\textsc{Resample}(#1,#2)}
\newcommand{\discmetric}[2]{\mathsf{M}(#1\cup #2)}
\newcommand{\data}[1]{\mathcal{D}}
\newcommand{\weights}[1]{\bm{w}_{#1}}
\newcommand{\ifit}[1]{I_{#1}}
\newcommand{\datafit}[1]{\mathcal{D}_{#1}}
\newcommand{\ctfdatafit}[1]{\tilde{\mathcal{D}}_{#1}}
\newcommand{\istop}[1]{I_{#1}^{\textrm{stop}}}
\newcommand{\datastop}[1]{\mathcal{D}_{#1}}

So far we have shown that influence functions can be used to evaluate the direction of steepest descent of a disparity metric (Proposition~\ref{thm::InfFunc_Corr_Func}), and that the value of an influence function can be estimated using data from the deployment population (Proposition~\ref{prop::inf_func_exp}). 

Considering these results, one would expect that disparity could be minimized by repeatedly (i) perturbing the distribution in the direction of steepest descent \eqref{equa::corrFuncInfFunc}, and (ii) estimating the influence function at the new, perturbed distribution. Repeating these steps, we would recover an approximate solution to \eqref{Eq::CounterfactualSet} -- i.e., an approximate counterfactual distribution.

In Algorithm~\ref{alg:descent}, we formalize this intuition by presenting a descent procedure to recover a counterfactual distribution for a given disparity metric $\mathsf{M}(\cdot)$. The procedure is analogous to stochastic gradient descent in the space of distributions over $\calX$, where the resampling at each iteration corresponds to a gradient step.

\begin{algorithm}[httb]
\begingroup
\small
\caption{Distributional Descent.}
\label{alg:descent}
\begin{spacing}{1.1}
\begin{algorithmic}[*]
\State {\bfseries Input:}
\State \quad $h: \mathcal{X} \to [0,1]$ \hfill\Comment{classification model}

\State \quad $\data{} =\{(\bx_i,y_i,s_i)\}_{i=1}^n$ \hfill\Comment{data from deployment population}

\State \quad $\mathsf{M}(\cdot)$ \hfill\Comment{disparity metric}

\State \quad $\epsilon>0$ \Comment{step size} \vspace{0.25em}

\State \textbf{Initialize} \vspace{0.25em}

\State $\ifit{0} \gets \{i = 1,\ldots, n ~|~ s_i = 0\}$ 

\State $\datafit{0} \gets (\bx_i, y_i)$ for $i \in \ifit{0}$ \Comment{samples where $s_i = 0$}

\State $\datafit{1} \gets (\bx_i, y_i)$ for $i \not\in \ifit{0}$ \Comment{samples where $s_i \neq 0$}

\State $\weights{0} \gets [w_i]_{i \in \ifit{0}}$ where $w_i = 1.0$ \Comment{initialize weights}

\State $M \gets \discmetric{\datastop{0}}{\datastop{1}}$ 
\Comment{evaluate disparity metric}

\Repeat
\State $M^\textrm{old} \gets M$

\State $\psi_i \gets \psi(\bx_i)$ for $i \in \ifit{0}$ \Comment{compute $\psi(\bx_i)$ for points in $\datafit{0}$}

\State $w_i \gets (1 - \epsilon \psi_i)\cdot w_i$  for $i \in \ifit{0}$ 

\State $\ctfdatafit{0} \gets \resample{\datafit{0}}{\bm{w}_0}$

\State $M \gets \discmetric{\ctfdatafit{0}}{\datastop{1}}$
\Comment{evaluate disparity metric}

\Until{$M \geq M^\textrm{old}$}

\State \textbf{return:} $\weights{0}$, $\ctfdatafit{0}$ \Comment{samples from counterfactual distribution}%

\\ \hrulefill
\Procedure{Resample}{$\data{}$, $\bm{w}$}
\State \textbf{return:}  $|\mathcal{D}|$ points sampled from $\mathcal{D}$ using the weights $\bm{w}$
\EndProcedure
\end{algorithmic}
\end{spacing}
\endgroup

\end{algorithm}

Given a classifier $h$ and a dataset $\{(\bx_i, y_i, s_i)\}_{i=1}^n$ from the distribution $P_{X,Y,S}$, the procedure outputs a dataset drawn from a counterfactual distribution. 

The procedure pairs each point with a \emph{sampling weight} $w_i$, which is initialized as $w_i = 1.0$. At each iteration, it first computes the value of the influence function $\psi(\bx).$ Next, it updates the values of each sampling weight for each point in the target group as $(1-\epsilon\psi(\bx_i))\cdot w_i$, where $\epsilon$ is a user-specified step size parameter. The updated sampling weights represent the direction in which the distribution for the target group should be perturbed to reduce $\mathsf{M}(\cdot).$ The data points from the target group are then resampled with their sampling weights. The set of resampled points mimics one drawn from the perturbed distribution. 

The procedure determines if the classifier still has disparate impact at the end of each iteration by computing the value of $\mathsf{M}(\cdot)$ on the set of resampled points. These steps are repeated until $\mathsf{M}(\cdot)$ ceases to decrease. Once the procedure stops, it outputs: (i) dataset drawn from a counterfactual distribution; (ii) a set of sampling weights for each point from the target group, which can be used to draw samples from the counterfactual distribution.

In Figure~\ref{Fig::toy_descent_fpr}, we show the progress (and convergence) of Algorithm~\ref{alg:descent} when recovering a counterfactual distribution for a synthetic dataset described in Appendix~\ref{Appendix::Experiments}. 

\begin{figure}[!tbp]
  \centering
  \includegraphics[width=\linewidth]{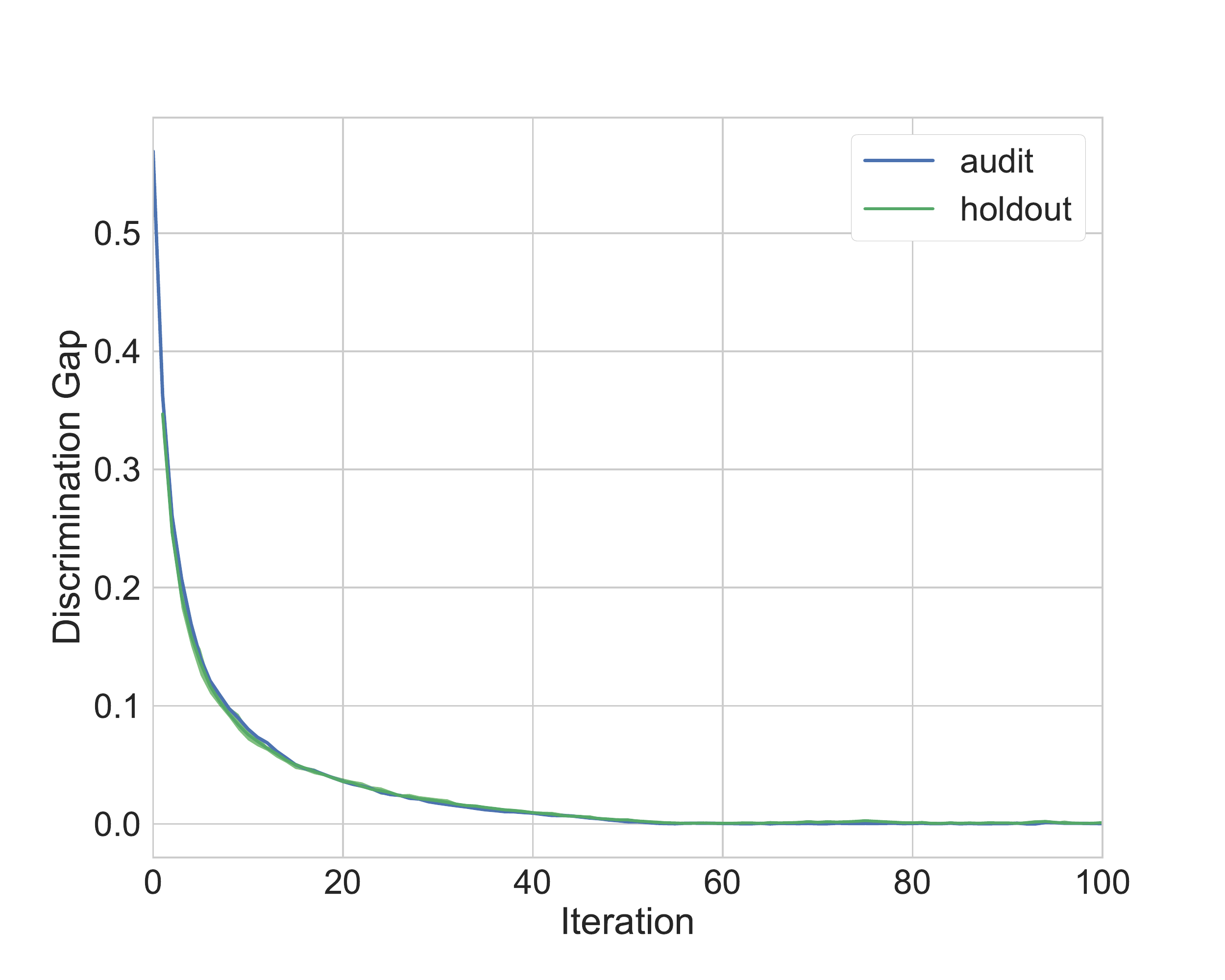}
  \caption{\small{Values of FPR for auditing dataset (blue) and holdout dataset (green), respectively, with each iteration in distributional descent for a synthetic dataset. Here, the procedure converges to a counterfactual distribution in 50 iterations. We show additional steps for the sake of illustration.}}
  \label{Fig::toy_descent_fpr}
\end{figure}

\section{Model Repair}
\label{Sec::ModelRepair}

In this section, we describe how to use counterfactual distributions to repair classifiers that exhibit disparate impact.

\paragraph{Preprocessor}

Given a classifier $h(\bx)$, we aim to mitigate disparate impact by constructing a  \emph{preprocessor} $T:\mathcal{X}\to\mathcal{X}$ that alters the features of the target group. Thus, the \emph{repaired classifier} $\tilde{h}(\bx)$ will operate as:
\begin{align}
\tilde{h}(\bx) = \begin{cases}
h(T(\bx)) & \text{if } s = 0, \\
h(\bx) & \text{otherwise}.
\end{cases}
\end{align}
The preprocessor is a (potentially randomized) mapping that transforms the distribution of samples over the target population into the counterfactual distribution, i.e., given a random variable $X$ drawn from the target population distribution, the distribution of $T(X)$ will approximate counterfactual distribution. 

\paragraph{Optimal Transport} 

We produce the preprocessor by solving an optimal transport problem. To this end, we require the following inputs: 
\begin{itemize}

    \item $\mathcal{D}_0$, which represents the original samples for the target group. We assume $\mathcal{D}_0$ contains $n_0$ samples, of which $m$ are distinct: $\{\bx_1,\cdots,\bx_m\}$.
    
    \item $\tilde{\mathcal{D}}_0$, which represents the samples drawn from the counterfactual distribution (i.e., the data produced via resampling in  Algorithm~\ref{alg:descent}). We assume that $\tilde{\mathcal{D}}_0$ contains $\tilde{n}_0$ samples, of which $\tilde{m}$ are distinct: $\{\tilde{\bx}_1,\cdots,\tilde{\bx}_{\tilde{m}}\}$. 
    
\end{itemize} 
With these samples at hand, we formulate an optimal transport problem of the form:

\newcommand{\st}{\textnormal{s.t.}}
\newcommand{\miprange}[3]{{#1}={#2},\cdots,{#3}}
\newcommand{\mipwhat}[1]{\textit{\tiny #1}}

\begin{subequations}
\label{LP::OptimalTransport}
\begin{equationarray}{@{}c@{}r@{\quad}l>{\;}r@{\;}}
\min_{\gamma_{ij} \in \mathbb{R}^+} & \quad \sum_{i=1}^m \sum_{j=1}^{\tilde{m}} C_{ij} \gamma_{ij} & & \mipwhat{} \label{Con::OTObj} \\ 
\st
& \sum_{j=1}^{\tilde{m}} \gamma_{ij} =   p_i &  \miprange{i}{1}{m}  & \mipwhat{} \label{Con::OTP}\\
& \sum_{i=1}^{m} \gamma_{ij} =  q_j &  \miprange{j}{1}{\tilde{m}}.  & \mipwhat{} \label{Con::OTQ}
\end{equationarray}
\end{subequations}

Here, $C_{ij}$ represents the cost of altering the input variables from $\bx_i$ to $\tilde{\bx}_j$ given a user-specified \emph{cost function} that we will discuss shortly; $p$, $q$ are the empirical estimates of $P_0$ and $Q_X$, respectively,
\begin{align*}
    p_i = \frac{1}{n_0}\sum_{\bx\in \mathcal{D}_0} \delta_{\bx_i}(\bx),\quad q_j = \frac{1}{\tilde{n}_0}\sum_{\bx\in \tilde{\mathcal{D}}_0} \delta_{\tilde{\bx}_j}(\bx);
\end{align*}
The optimal transport problem in \eqref{LP::OptimalTransport} is a standard linear program that aims to find a \emph{coupling} of $p$ and $q$, $\gamma$ \citep[see e.g.,][]{peyre2017computational,villani2008optimal}. 
Formally, a coupling is a joint probability distribution with marginal distributions specified by $p$ and $q$. Given the minimal-cost coupling $\gamma^*$, one can construct a (randomized) preprocessor $T(\cdot)$ which takes a sample $\bx_i$ and returns an altered sample $\tilde{\bx}_j$ with probability $\gamma^*_{ij}/p_i$. 

We note that the linear programming formulation in \eqref{LP::OptimalTransport} is designed for settings with discrete input distributions. In settings when the distributions $P_0$ and $Q_X$ are continuous, an analogous optimal transport problem can be formulated and solved with other approaches~\citep[see e.g.,][]{benamou2000computational,angenent2003minimizing}.

\paragraph{Choice of Cost Function}

The cost function $C_{ij}$ controls how samples of the target group are perturbed. By default, one could use a standard distance metric such as the $L_2$-norm \citep[e.g., ][]{feldman2015certifying,johndrow2017algorithm,del2018obtaining}. However, one could also consider additional criteria to fine-tune the mapping specified by $T(\cdot).$ For example, one can specify a cost function that avoids specific kinds of change by setting the value of $C_{ij}$ to a large constant so as to penalize undesirable mappings \citep[e.g., a mapping that would alter immutable attributes such as marital status][]{ustun2019actionable}.

\paragraph{Customization via Constraints}

Users can also fine-tune the behavior of the preprocessor by adding custom constraints to the feasible region of optimal transport problems as in \eqref{LP::OptimalTransport}. For example, one can impose constraints on \emph{individual fairness} to ensure that the repaired classifier will ``treat similar individuals similarly'' \citep[see e.g.,][]{dwork2012fairness}. This behavior could be induced by including constraints of the form:
\begin{align*}
    \frac{1}{2}\sum_{j=1}^{\tilde{m}}\left|\frac{\gamma_{ij}}{p_i} - \frac{\gamma_{lj}}{p_l}\right| \leq \mathsf{d}(\bx_i,\bx_l) \quad \textrm{for all}\ i, l \in [m].
\end{align*}
Here, the LHS is the total-variation distance \citep{cover2012elements} between the distributions of $T(\bx_i)$ and $T(\bx_l)$, and  $\mathsf{d}(\cdot,\cdot): \mathcal{X}\times\mathcal{X}\to \Reals^{+}$ is a distance metric that reflects the similarity between samples.

\section{Experiments}
\label{Sec::Experiments}

In this section, we demonstrate how counterfactual distributions can be used to avoid disparate impact for classifiers on real-world datasets. We include all datasets and scripts to reproduce our results at \gitrepo{}.

\paragraph{Setup}

We aim to recover counterfactual distributions for different disparity metrics in Table \ref{tabel:ExpDiscMet}. To this end, we consider processed versions of the \texttt{adult} dataset \citep{bache2013uci} and the ProPublica \texttt{compas} dataset \citep{angwin2016machine}.

For each dataset, we use: 
\begin{itemize}
    \item 30\% of samples to train a classifier $h(\bx)$ to repair; 
    \item 50\% of samples to recover a counterfactual distribution via Algorithm \ref{alg:descent};
    \item 20\% of samples as a hold-out set to evaluate the performance of the repaired model.
\end{itemize}

We use $\ell_2$-logistic regression to train a classifier $h(\bx)$ as well as the classifiers $\hat{y}_0(\bx)$ and $\hat{s}(\bx)$ that we use to estimate the influence functions in Algorithm~\ref{alg:descent}. We tune the parameters and estimate the performance of each classifiers using a standard 10-fold CV setup.

Our setup assumes that the data used to train and repair the model are drawn from the same distribution, which may not be the case in settings with dataset shift. Our setup also differs from real-world settings in that we use 70\% of the samples in each dataset to estimate the counterfactual distribution. In practice, however, we would use all available samples since we would be given the classifier to repair.

\paragraph{Discussion}

In Table \ref{Table::Preprocessing}, we show the effectiveness of preprocessors built to mitigate different kinds of disparity for the classifiers trained on the \texttt{adult} and \texttt{compas} datasets. 

We build each preprocessor as follows. We first resample data from the target population according to Algorithm~\ref{alg:descent}. This outputs a dataset of samples drawn from the counterfactual distribution. Next, we use the resampled dataset to produce an empirical estimate of the counterfactual distribution $Q_X$. This distribution is then used to obtain the preprocessor by solving a version of \eqref{LP::OptimalTransport} with the cost function $C_{ij}= \|\bx_i-\tilde{\bx}_j\|_2^2$.

As shown, the approach reduces disparate impact in the target group, while having a minor effect on test accuracy at various decision points across the full ROC curve. Counterfactual distributions provide a way to scrutinize this mapping in greater detail. As shown in Table \ref{Table::CounterFactual_adult}, one can visualize the differences between the observed distribution and counterfactual distribution to understand how the input variable distributions are altered to reduce disparity. 

This kind of constrastive analysis may be helpful in understanding the factors that produce performance disparities in the first place. For example, the differences between the observed distribution $\PXa$ and the counterfactual distribution $Q_X$ could be used to identify prototypical samples \citep[see e.g,][]{bien2011prototype,kim2016examples}, or to score features in terms of their ability to produce disparities in the deployment population \cite{datta2016algorithmic,datta2017proxy,adler2018auditing}.

\newcommand{\DA}[1]{\textrm{DA}_{#1}}

\begin{table*}[t]
\small\centering
\resizebox{0.9\textwidth}{!}{
\renewcommand{\arraystretch}{1.1}
\begin{tabular}{lllcccrrcc}
\toprule
&
&
& \multicolumn{3}{c}{\textsc{Original Model}} 
& \multicolumn{2}{c}{\textsc{Repaired Model}} 
& \multicolumn{2}{c}{\textsc{Target Group AUC}} \\
\cmidrule(lr){4-6} \cmidrule(lr){7-8} \cmidrule(lr){9-10}
\textsc{Dataset} & 
\textsc{Metric} & 
\sccell{l}{Target\\Group} & 
\sccell{c}{Baseline\\Group} & 
\sccell{c}{Target\\Group} & 
\sccell{c}{\textbf{Disc.}\\\textbf{Gap}} & 
\sccell{c}{Target\\Group} & 
\sccell{c}{\textbf{Disc.}\\\textbf{Gap}} & 
\sccell{c}{Before\\Repair} & 
\sccell{c}{After\\Repair} \\ 

\toprule

\texttt{adult} & SP & Female & 0.696 & 0.874 & \textbf{0.178} & 0.688 & \textbf{-0.007} & 0.895 & 0.758

\\ \midrule

\texttt{adult} & FNR & Female & 0.478 & 0.639 & \textbf{0.161} & 0.483 & \textbf{0.004} & 0.895 & 0.880

\\ \midrule

\texttt{adult} & FPR & Male & 0.021 & 0.119 & \textbf{0.098} & 0.023 & \textbf{0.002} & 0.829 & 0.714

\\ \toprule

\texttt{compas} & SP &  White & 0.514 & 0.594  & \textbf{0.079} & 0.533  & \textbf{0.018} & 0.704  &  0.667

\\\midrule

\texttt{compas} & FNR & White & 0.350 & 0.487 & \textbf{0.137} & 0.439 & \textbf{0.088} & 0.704 & 0.699

\\\midrule

\texttt{compas} & FPR & Non-white & 0.190 & 0.278 & \textbf{0.087} & 0.160 & \textbf{-0.029} & 0.732 & 0.680

\\ \bottomrule 
\end{tabular}
}
\caption{Change in disparate impact for classification models for \texttt{adult} and \texttt{compas} when paired with a randomized preprocessor built to mitigate different kinds of disparity. Each row shows the value of a specific performance metric for the classifier over the target and baseline groups (e.g., SP, FNR, and FPR). The target group is defined as the group that attains the less favorable value of the performance metric. The preprocessor aims to reduce to difference in performance metric by randomly perturbing the input variables for individuals in the target group. We also include AUC to show the change in performance due to the randomized preprocessor. All values are computed using a hold-out sample that is not used to train the model or build the preprocessor.}
\label{Table::Preprocessing}
\end{table*}
\begin{table*}[h]
\centering
\resizebox{0.5\textwidth}{!}{
\begin{tabular}{lrr@{\hspace{0.4cm}}rrr}
\toprule
& \multicolumn{2}{c}{\textsc{Observed}}

& \multicolumn{3}{c}{\textsc{Counterfactual}} \\
\cmidrule(lr){2-3} 
\cmidrule(lr){4-6}
{} &  
Female &  
Male &  
\cell{c}{SP\\Female}  &  
\cell{c}{FNR\\Female} & 
\cell{c}{FPR\\Male}\\ 
\midrule
Married                      &      18\% &    63\% &    39\%  & 23\% &   54\% \\
Immigrant                    &      10\% &    11\% &    11\% & 11\% &   12\%  \\
HighestDegree\_is\_HS        &      32\% &    32\% &    24\% & 28\% &   37\%  \\
HighestDegree\_is\_AS        &       7\% &     8\% &     9\% &   9\%  &   6\%  \\
HighestDegree\_is\_BS        &      15\% &    18\% &    21\% & 17\% &   13\%  \\
HighestDegree\_is\_MSorPhD   &       6\% &     7\% &    13\% &   8\%  &   5\% \\
AnyCapitalLoss               &       3\% &     5\% &     8\% &   5\%  &   4\%  \\
Age $\leq$ 30                 &      39\% &    29\% &    29\% & 38\% &   35\% \\
WorkHrsPerWeek$<$40       &      38\% &    17\% &    33\% & 37\% &   19\% \\
JobType\_is\_WhiteCollar     &      34\% &    19\% &    36\% & 35\% &   15\% \\
JobType\_is\_BlueCollar      &       5\% &    34\% &     4\% &  5\%  &   39\% \\
JobType\_is\_Specialized     &      23\% &    21\% &    29\% & 23\% &   20\% \\
JobType\_is\_ArmedOrProtective &     1\% &     2\% &     1\% &   1\%  &   3\% \\
Industry\_is\_Private        &      73\% &    69\% &    64\% & 69\% &   70\% \\
Industry\_is\_Government     &      15\% &    12\% &    22\% & 17\% &   12\% \\
Industry\_is\_SelfEmployed   &       5\% &    15\% &     8\% & 6\%  &   13\% \\
\bottomrule
\end{tabular}
}
\caption{Counterfactual distributions produced using Algorithm \ref{alg:descent} for a classifier on \texttt{adult}. We observe that different metrics produce different counterfactual distributions. By comparing the distribution of the target group with the counterfactual distribution, we can evaluate how the repaired classifier will perturb their features to reduce disparity.}
\label{Table::CounterFactual_adult}
\end{table*}
\section{Conclusion}
\label{Sec::Conclusion}

We have introduced a new distributional paradigm to understand and mitigate disparate impact. Our framework is based on counterfactual distributions, which can be efficiently computed given a fixed model and data from a population of interest. The tools in this work apply to binary classification models. However, our approach can be extended to handle other kinds of supervised learning models.

\paragraph{Limitations}

Our approach requires collecting data on sensitive attribute, which may infringe privacy \citep[though this is difficult to avoid as discussed in][]{vzliobaite2016using}. Our approach also aims to mitigate performance using a randomized preprocessor. While randomization is a common technique to reduce disparate impact in the literature \citep[see e.g.,][]{hardt2016equality,agarwal2018reductions}, it may not be practical in applications such as loan approval since an applicant could achieve a different predicted outcome by applying multiple times. Some effects of randomization can be mitigated by heuristic strategies. Given that we have considered counterfactual distributions that improve the performance for the target group, however, our approach does have a benefit in that  randomization will only apply to individuals in the target group and only be applied in a way that will improve their outcomes.

\clearpage
\section*{Acknowledgements}
This material is based upon work supported by the National Science Foundation under Grant No. CCF-18-45852.
{
\small
\bibliographystyle{icml2019}
\bibliography{ctf_references}
}

\clearpage
\onecolumn
\appendix
\onecolumn
\section{Omitted Proofs}
\label{sec::App_proofs}

\subsection{Factorization of Joint Distribution}
\label{append::fact_metrics}

In this section, we show that the disparity metrics in Table~\ref{tabel:ExpDiscMet} can be expressed in terms of $\PXa$ when $P_{\hat{Y}|X}$, $P_{Y|X,S}$, $\PXb$, and $P_S$ are given.

We observe that since our classifier is fixed, the joint distribution $P_{S,X,Y,\hat{Y}}$ is characterized by the graphical model shown in Figure~\ref{Fig::GM}. Accordingly, we can express $P_{S,X,Y,\hat{Y}}$ as:
\begin{align}
P_{S,X,Y,\hat{Y}}=P_{\hat{Y}|X}P_{Y|X,S}P_SP_{X|S}.
\end{align}
Note that $h(\bx)=P_{\hat{Y}|X}(1|\bx)$.
\begin{figure}[b]
  \centering
  \includegraphics[width=0.25\textwidth]{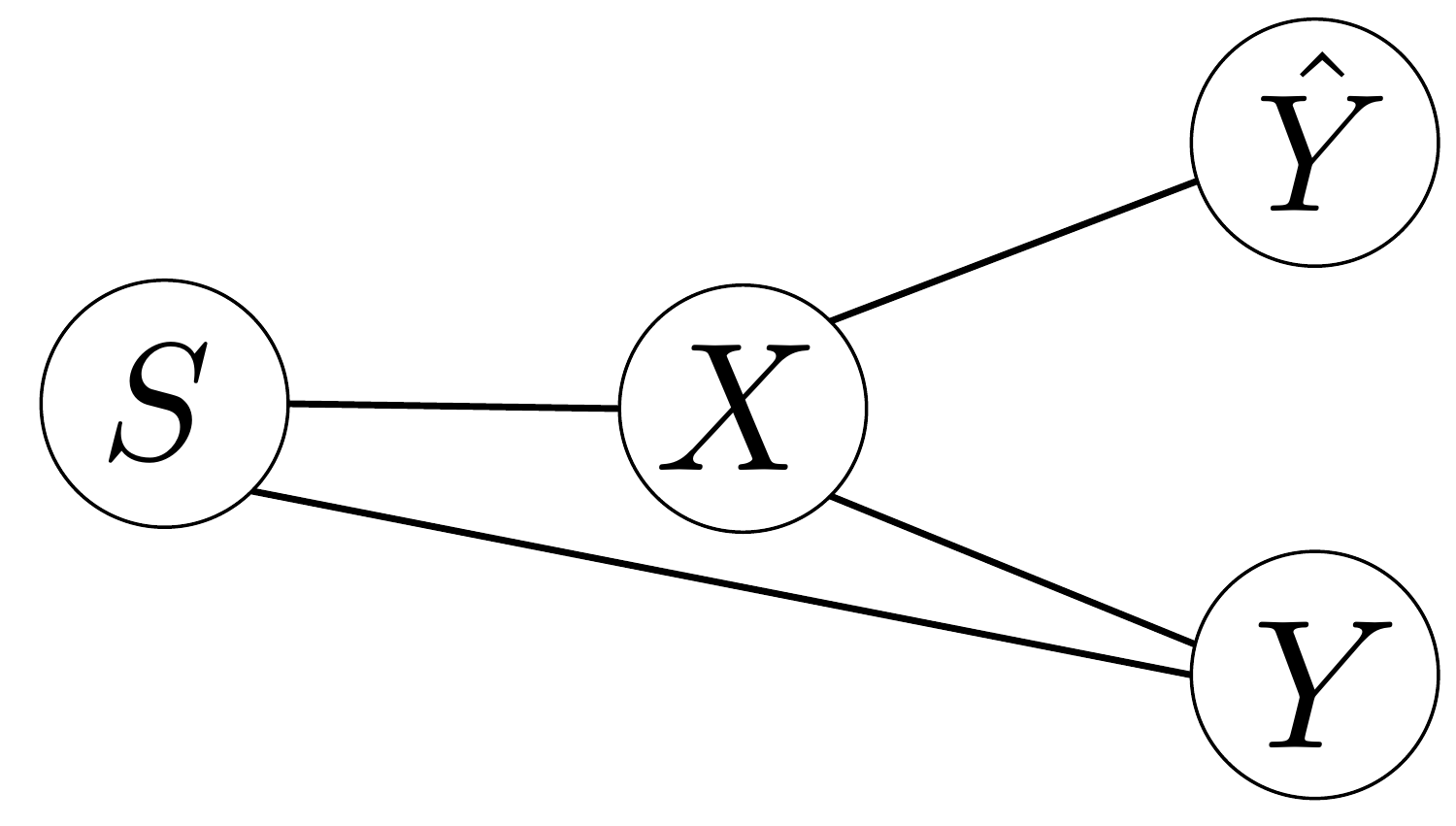}
  \caption{\small{Graphical model of the framework.}}
  \label{Fig::GM}
\end{figure}
In what follows, we use these observations to express each of the disparity metrics in Table~1 as $\mathsf{M}(\PXa)$ (i.e., a function of $\PXa$).

\begin{enumerate}
    \item \textbf{DA.}
    \begin{align}
        \KL(\PYhata\|\PYhatb)+\lambda\KL(\PXa\|\PXb) = \KL(\W\circ \PXa\|\W\circ\PXb)+\lambda\KL(\PXa\|\PXb).
    \end{align}
    \item \textbf{SP.}
    \begin{align}
        \Pr(\hat{Y}=0|S=0)-\Pr(\hat{Y}=0|S=1)
        &=\EE{(1-h(X))|S=0}-\EE{(1-h(X))|S=1}\nonumber\\
        &=-\sum_{\bx\in\mathcal{X}}h(\bx)\PXa(\bx)+\sum_{\bx\in\mathcal{X}}h(\bx)\PXb(\bx).
    \end{align}
    \item \textbf{FDR.}
    \begin{align}
        &\Pr(Y=0|\hat{Y}=1,S=0)-\Pr(Y=0|\hat{Y}=1,S=1)\nonumber\\
        &=\frac{\Pr(Y=0,\hat{Y}=1,S=0)}{\Pr(\hat{Y}=1,S=0)}-\frac{\Pr(Y=0,\hat{Y}=1,S=1)}{\Pr(\hat{Y}=1,S=1)}\nonumber\\
        &=\frac{\sum_{\bx\in\mathcal{X}}P_{\hat{Y}|X}(1|\bx)P_{Y|X,S=0}(0|\bx)\PXa(\bx)}{\sum_{\bx\in\mathcal{X}}P_{\hat{Y}|X}(1|\bx)\PXa(\bx)}-\frac{\sum_{\bx\in\mathcal{X}}P_{\hat{Y}|X}(1|\bx)P_{Y|X,S=1}(0|\bx)\PXb(\bx)}{\sum_{\bx\in\mathcal{X}}P_{\hat{Y}|X}(1|\bx)\PXb(\bx)}.
    \end{align}
    \item \textbf{FNR.}
    \begin{align}
        &\Pr(\hat{Y}=0|Y=1,S=0)-\Pr(\hat{Y}=0|Y=1,S=1)\nonumber\\
        &=\frac{\sum_{\bx\in\mathcal{X}}P_{\hat{Y}|X}(0|\bx)P_{Y|X,S=0}(1|\bx)\PXa(\bx)}{\sum_{\bx\in\mathcal{X}}P_{Y|X,S=0}(1|\bx)\PXa(\bx)}-\frac{\sum_{\bx\in\mathcal{X}}P_{\hat{Y}|X}(0|\bx)P_{Y|X,S=1}(1|\bx)\PXb(\bx)}{\sum_{\bx\in\mathcal{X}}P_{Y|X,S=1}(1|\bx)\PXb(\bx)}.
    \end{align}
    \item \textbf{FPR.}
    \begin{align}
        &\Pr(\hat{Y}=1|Y=0,S=0)-\Pr(\hat{Y}=1|Y=0,S=1)\nonumber\\
        &=\frac{\sum_{\bx\in\mathcal{X}}P_{\hat{Y}|X}(1|\bx)P_{Y|X,S=0}(0|\bx)\PXa(\bx)}{\sum_{\bx\in\mathcal{X}}P_{Y|X,S=0}(0|\bx)\PXa(\bx)}-\frac{\sum_{\bx\in\mathcal{X}}P_{\hat{Y}|X}(1|\bx)P_{Y|X,S=1}(0|\bx)\PXb(\bx)}{\sum_{\bx\in\mathcal{X}}P_{Y|X,S=1}(0|\bx)\PXb(\bx)}.
    \end{align}
\end{enumerate}

\subsection{Example of Counterfactual Distributions}
\label{append:ex_ctf}

We show that the counterfactual distributions are not always unique.
\begin{example}
\label{ex::multiple_ctf}
We use SP as a disparity metric and set $X|S=0 \sim \mathsf{Bernoulli}(0.1)$, $X|S=1 \sim \mathsf{Bernoulli}(0.2)$. The classifier is chosen as $h(0)=h(1)=0.2$. In this case, any Bernoulli distribution, including $\PXa$ and $\PXb$, over $\{0,1\}$ is a counterfactual distribution.
\end{example}

\subsection{Proof of Proposition~\ref{prop::ctf_posi_dist_diff}}
\begin{proof}
First, the counterfactual distributions under DA or SP always achieve zero of the disparity metric. Hence, $\mathsf{M}(Q_X)>0$ happens only if the disparity metric is neither DA nor SP. We assume that $P_{Y|X,S=0}=P_{Y|X,S=1}$ and $\mathsf{M}(Q_X)>0$. In particular, $|\mathsf{M}(\PXb)|\geq \mathsf{M}(Q_X) >0$. Note that the disparity metrics in Table~\ref{tabel:ExpDiscMet} except DA are the form of the discrepancies of performance metrics between two groups. Here the performance metrics for each group only depend on $P_{Y|X,S=i}$, $P_{X|S=i}$, and $\W$. If we assume that $P_{Y|X,S=0}=P_{Y|X,S=1}$ and set the distribution of target group as $\PXb$, then the performance metrics achieve the same values for two groups. Hence, $\mathsf{M}(\PXb)=0$ which contradicts the assumption, so $P_{Y|X,S=0}\neq P_{Y|X,S=1}$.
\end{proof}

\subsection{Proof of Proposition~\ref{thm::InfFunc_Corr_Func}}
\begin{proof}
First, we define
\begin{align}
    \Delta(f) \defined \lim_{\epsilon \to 0} \frac{\mathsf{M}(\PXp) - \mathsf{M}(\PXa)}{\epsilon},
\end{align}
where $\PXp(\bx)$ is the perturbed distribution defined in \eqref{Eq::Perturb_dist}. Then we prove that
\begin{align*}
    \Delta(f) = \EE{f(X)\psi(X)|S=0}.
\end{align*}
Note that an alternative way \citep[see e.g.,][]{huber2011robust} to define influence functions is in terms of the G\^ateaux derivative:
\begin{align*}
    \sum_{\bx\in\mathcal{X}} \psi(\bx)\PXa(\bx) &= 0,
\end{align*}
and
\begin{align*}
    \lim_{\epsilon\to 0}\frac{1}{\epsilon}\left(\mathsf{M}\left((1-\epsilon)\PXa+\epsilon Q\right)-\mathsf{M}\left(\PXa\right)\right)
    = \sum_{\bx\in\mathcal{X}} \psi(\bx)Q(\bx),\ \forall Q\in \mathcal{P}.
\end{align*}
In particular, we can choose $Q(\bx)=\left(\frac{1}{M_U}f(\bx)+1\right)\PXa(\bx),$
where $M_U \defined \sup\{|f(\bx)| \mid \bx\in\mathcal{X}\}+1$. Then
\begin{align*}
    (1-\epsilon)\PXa(\bx)+\epsilon Q(\bx) = \PXa(\bx) + \frac{\epsilon}{M_U}f(\bx)\PXa(\bx).
\end{align*}
For simplicity, we use $\PXa+\epsilon f\PXa$ and $\PXa + \frac{\epsilon}{M_U}f\PXa$ to represent $\PXa(\bx)+\epsilon f(\bx)\PXa(\bx)$ and $\PXa(\bx) + \frac{\epsilon}{M_U}f(\bx)\PXa(\bx)$, respectively. Then
\begin{align*}
    \Delta(f)
    &=\lim_{\epsilon\to 0}\frac{1}{\epsilon}\left(\mathsf{M}(\PXa+\epsilon f\PXa)
    -\mathsf{M}(\PXa)\right)\\
    &=\lim_{\epsilon\to 0}\frac{M_U}{\epsilon}\left(\mathsf{M}\left(\PXa+\frac{\epsilon}{M_U} f\PXa\right)
    -\mathsf{M}(\PXa)\right)\\
    &=M_U\lim_{\epsilon\to 0}\frac{1}{\epsilon}\left(\mathsf{M}((1-\epsilon)\PXa+\epsilon Q)
    -\mathsf{M}(\PXa)\right)\\
    &=M_U\sum_{\bx\in\mathcal{X}} \psi(\bx)Q(\bx)\\
    &=M_U\sum_{\bx\in\mathcal{X}} \psi(\bx)\left(\frac{1}{M_U}f(\bx)+1\right)
    \PXa(\bx)\\
    &=\sum_{\bx\in\mathcal{X}} \psi(\bx)f(\bx)\PXa(\bx)\\
    &=\EE{f(X)\psi(X)|S=0}.
\end{align*}
Following from Cauchy-Schwarz inequality,
\begin{align*}
    \EE{f(X)\psi(X)|S=0}
    \geq -\sqrt{\EE{f(X)^2|S=0}}
    \sqrt{\EE{\psi(X)^2|S=0}}=-\sqrt{\EE{\psi(X)^2|S=0}}.
\end{align*}
Here the equality can be achieved by choosing
\begin{align*}
    f(\bx) = \frac{-\psi(\bx)}{\sqrt{\EE{\psi(X)^2|S=0}}}.
\end{align*}
\end{proof}

\subsection{Proof of Proposition~\ref{thm::Linearility_Inf_Cor}}
\begin{proof}
When the disparity metric is a linear combination of $K$ different disparity metrics: 
\begin{align*}
    \M = \sum_{i=1}^K\lambda_i\mathsf{M}_i(\PXa),
\end{align*}
the influence function, following from Definition~\ref{defn::inf_func}, is
\begin{align}
    \psi(\bx)
    &= \lim_{\epsilon\to 0}\frac{\mathsf{M}\left((1-\epsilon)\PXa+\epsilon \delta_{\bx}\right) -\mathsf{M}(\PXa)}{\epsilon}\\
    &= \sum_{i=1}^K\lambda_i \lim_{\epsilon\to 0}\frac{\mathsf{M}_i\left((1-\epsilon)\PXa+\epsilon \delta_{\bx}\right) -\mathsf{M}_i(\PXa)}{\epsilon}\\
    &= \sum_{i=1}^K\lambda_i \psi_i(\bx).
\end{align}

\end{proof}

\subsection{Proofs of Proposition~\ref{prop::inf_func_exp}}
\label{append::closed_form_inf_func}

We prove the closed-form expressions of influence functions provided in Proposition~\ref{prop::inf_func_exp} in this section. Again, we view the classifier $h(\bx)$ as a conditional distribution $P_{\hat{Y}|X}(1|\bx)$.

\begin{proof}
\textbf{Influence function for SP.}
Recall that 
\begin{align*}
    \Pr(\hat{Y}=0|S=0)
    =1-\sum_{\bx\in\mathcal{X}}h(\bx)\PXa(\bx).
\end{align*}
When we perturb the distribution $\PXa$, the classifier $h(\bx)$ and $\Pr(\hat{Y}=1|S=1)$ do not change. Therefore,
\begin{align*}
    \psi(\bx)
    &= \lim_{\epsilon\to0}
    -\frac{1}{\epsilon}\left(\sum_{\bx'\in \mathcal{X}} h(\bx')((1-\epsilon)\PXa(\bx')+\epsilon\delta_{\bx}(\bx'))
    -\sum_{\bx'\in \mathcal{X}} h(\bx')\PXa(\bx')\right)\\
    &=-h(\bx)+\Pr(\hat{Y}=1|S=0).
\end{align*}

\textbf{Influence function for FNR.} Next, we compute the influence function of FNR. Similar analysis holds for FPR and FDR. Due to the factorization of the joint distribution (see Appendix~\ref{append::fact_metrics}), we have
\begin{align*}
    \Pr(\hat{Y}=0|Y=1,S=0)
    =\frac{\sum_{\bx'\in\mathcal{X}}\W(0|\bx')P_{Y|X,S=0}(1|\bx')\PXa(\bx')}{\sum_{\bx'\in\mathcal{X}}P_{Y|X,S=0}(1|\bx')\PXa(\bx')}.
\end{align*}
We denote $r_1(\bx) \defined \W(0|\bx)P_{Y|X,S=0}(1|\bx)$ and $r_2(\bx) \defined P_{Y|X,S=0}(1|\bx)$. Then
\begin{align*}
    \Pr(\hat{Y}=0|Y=1,S=0) 
    =\frac{\sum_{\bx'\in\mathcal{X}}r_1(\bx')\PXa(\bx')}{\sum_{\bx'\in\mathcal{X}}r_2(\bx')\PXa(\bx')}
    =\frac{\EE{r_1(X)|S=0}}{\EE{r_2(X)|S=0}},
\end{align*}
which implies
\begin{align*}
    &\mathsf{M}((1-\epsilon)\PXa+\epsilon \delta_{\bx})\\
    &= \frac{\sum_{\bx'\in\mathcal{X}}r_1(\bx')((1-\epsilon)\PXa(\bx')+\epsilon\delta_{\bx}(\bx'))}{\sum_{\bx'\in\mathcal{X}}r_2(\bx')((1-\epsilon)\PXa(\bx')+\epsilon\delta_{\bx}(\bx'))}-\Pr(\hat{Y}=0|Y=1,S=1)\\
    &= \frac{\EE{r_1(X)|S=0}+\epsilon\left( r_1(\bx)-\EE{r_1(X)|S=0}\right)}{\EE{r_2(X)|S=0}+\epsilon \left(r_2(\bx)-\EE{r_2(X)|S=0}\right)}-\Pr(\hat{Y}=0|Y=1,S=1).
\end{align*}
Therefore,
\begin{align*}
    \psi(\bx)
    &= \lim_{\epsilon\to 0} \frac{1}{\epsilon}\left(\mathsf{M}((1-\epsilon)\PXa+\epsilon \delta_{\bx})
    -\mathsf{M}(\PXa)\right)\\
    &=\frac{\EE{r_2(X)|S=0} r_1(\bx)-\EE{r_1(X)|S=0}r_2(\bx)}{\EE{r_2(X)|S=0}^2}\\
    &=\frac{\Pr(Y=1|S=0) r_1(\bx)-\Pr(\hat{Y}=0,Y=1|S=0)r_2(\bx)}{\Pr(Y=1|S=0)^2}\\
    &=\frac{\W(0|\bx)P_{Y|X,S=0}(1|\bx)-\Pr(\hat{Y}=0|Y=1,S=0)P_{Y|X,S=0}(1|\bx)}{\Pr(Y=1|S=0)}.
\end{align*}

\textbf{Influence function for DA.} We start with computing $\KL((1-\epsilon)\PXa+\epsilon\delta_{\bx}\|\PXb)$:
\begin{align*}
    \KL((1-\epsilon)\PXa+\epsilon\delta_{\bx}\|\PXb)
    &=\sum_{\bx'\in\mathcal{X}}((1-\epsilon)\PXa(\bx')+\epsilon \delta_{\bx}(\bx')) \log\frac{(1-\epsilon)\PXa(\bx')+\epsilon \delta_{\bx}(\bx')}{\PXb(\bx')}\\
    &=\sum_{\bx'\in\mathcal{X}}(\PXa(\bx')+\epsilon (\delta_{\bx}(\bx')-\PXa(\bx')))\\ &\quad\quad\quad\times\left(\log\frac{\PXa(\bx')}{\PXb(\bx')} 
    +\log\left(1+\frac{\epsilon(\delta_{\bx}(\bx')-\PXa(\bx'))}{\PXa(\bx')}\right)\right)\\
    &=\sum_{\bx'\in\mathcal{X}}(\PXa(\bx')+\epsilon (\delta_{\bx}(\bx')-\PXa(\bx')))\\ &\quad\quad\quad\times\left(\log\frac{\PXa(\bx')}{\PXb(\bx')}
    +\epsilon\frac{\delta_{\bx}(\bx')-\PXa(\bx')}{\PXa(\bx')}+O(\epsilon^2)\right)\\
    &=\KL(\PXa\|\PXb)
    +\epsilon\sum_{\bx'\in\mathcal{X}} (\delta_{\bx}(\bx')-\PXa(\bx'))\log\frac{\PXa(\bx')}{\PXb(\bx')}
    +O(\epsilon^2)\\
    &=\KL(\PXa\|\PXb)
    +\epsilon\left(\log\frac{\PXa(\bx)}{\PXb(\bx)} - \EE{\log\frac{\PXa(X)}{\PXb(X)}\Bigg|S=0}\right)
    +O(\epsilon^2).
\end{align*}
Hence,
\begin{align}
    \lim_{\epsilon\to0}
    \frac{1}{\epsilon}\left(\KL((1-\epsilon)\PXa+\epsilon\delta_{\bx}\|\PXb)-\KL(\PXa\|\PXb)\right)
    =\log\frac{\PXa(\bx)}{\PXb(\bx)} - \EE{\log\frac{\PXa(X)}{\PXb(X)}\Bigg|S=0}\label{equa::inf_rpp_a}.
\end{align}
Similarly, we have
\begin{align}
    &\lim_{\epsilon\to0}
    \frac{1}{\epsilon}\left(\KL((1-\epsilon)\PYhata+\epsilon\W\circ\delta_{\bx}\|\PYhatb)-\KL(\PYhata\|\PYhatb)\right) \nonumber\\
    &=\sum_{y\in\{0,1\}} ((\W\circ \delta_{\bx})(y)-\PYhata(y))\log\frac{\PYhata(y)}{\PYhatb(y)}\nonumber\\
    &=\sum_{y\in\{0,1\}}\log\frac{\PYhata(y)}{\PYhatb(y)}\W(y|\bx)
    -\EE{\log\frac{\PYhata(\hat{Y})}{\PYhatb(\hat{Y})}\Bigg|S=0} \label{equa::inf_rpp_b}.
\end{align}
Combining \eqref{equa::inf_rpp_a} with \eqref{equa::inf_rpp_b}, we have
\begin{align*}
    \psi(\bx)
    &= \sum_{y\in\{0,1\}}\log\frac{\PYhata(y)}{\PYhatb(y)}\W(y|\bx)
    -\EE{\log\frac{\PYhata(\hat{Y})}{\PYhatb(\hat{Y})}\Bigg|S=0}\\
    &\quad +\lambda\left(\log\frac{\PXa(\bx)}{\PXb(\bx)} - \EE{\log\frac{\PXa(X)}{\PXb(X)}\Bigg|S=0}\right).
\end{align*}
Note that
\begin{align*}
    \log\frac{\PXa(\bx)}{\PXb(\bx)} 
    =\log\frac{P_{X,S}(\bx,0)}{P_{X,S}(\bx,1)}+\log\frac{P_S(1)}{P_S(0)}
    =\log\frac{P_{S|X}(0|\bx)}{P_{S|X}(1|\bx)}+\log\frac{P_S(1)}{P_S(0)}.
\end{align*}
Hence,
\begin{align*}
    \log\frac{\PXa(\bx)}{\PXb(\bx)} - \EE{\log\frac{\PXa(X)}{\PXb(X)}\Bigg|S=0}
    &=\log\frac{P_{S|X}(0|\bx)}{P_{S|X}(1|\bx)}-\EE{\log\frac{P_{S|X}(0|X)}{P_{S|X}(1|X)}\Bigg|S=0}\\
    &=\log\frac{1-P_{S|X}(1|\bx)}{P_{S|X}(1|\bx)}-\EE{\log\frac{1-P_{S|X}(1|X)}{P_{S|X}(1|X)}\Bigg|S=0}.
\end{align*}
Next,
\begin{align*}
    &\sum_{y\in\{0,1\}}\log\frac{\PYhata(y)}{\PYhatb(y)}\W(y|\bx)- \EE{\log\frac{\PYhata(\hat{Y})}{\PYhatb(\hat{Y})}\Bigg|S=0}\\
    &=\log\frac{\PYhata(1)}{\PYhatb(1)}\W(1|\bx)+\log\frac{\PYhata(0)}{\PYhatb(0)}(1-\W(1|\bx))\\
    &\quad- \EE{\log\frac{\PYhata(\hat{Y})}{\PYhatb(\hat{Y})}\Bigg|S=0}\\
    &=\log\frac{\PYhata(1)\PYhatb(0)}{\PYhatb(1)\PYhata(0)}\W(1|\bx)\\
    &\quad +\log\frac{\PYhata(0)}{\PYhatb(0)}-\log\frac{\PYhata(0)}{\PYhatb(0)}\PYhata(0)-\log\frac{\PYhata(1)}{\PYhatb(1)}\PYhata(1)\\
    &=\left(\log\frac{\PYhata(1)\PYhatb(0)}{\PYhatb(1)\PYhata(0)}\right)\W(1|\bx)-\left(\log\frac{\PYhata(1)\PYhatb(0)}{\PYhatb(1)\PYhata(0)}\right)\PYhata(1).
\end{align*}
Therefore, we have
\begin{align*}
    \psi(\bx)
    &=\left(\log\frac{\PYhata(1)\PYhatb(0)}{\PYhatb(1)\PYhata(0)}\right)\left(\W(1|\bx)-\PYhata(1)\right)\\
    &\quad +\lambda\left(\log\frac{1-P_{S|X}(1|\bx)}{P_{S|X}(1|\bx)}-\EE{\log\frac{1-P_{S|X}(1|X)}{P_{S|X}(1|X)}\Bigg|S=0}\right).
\end{align*}
\end{proof}

\subsection{Convergence}
\label{appendix_convergence}

When influence functions are estimated from data, they are subject to estimation error. Next, we present a probabilistic upper bound of the estimation error in terms of the number of samples and the size of the support set.
\begin{prop}
\label{prop::generalizationthm_Cal}
If $\hat{s}(\bx)$ and $\hat{y}_0(\bx)$ are the empirical conditional distributions obtained from $n$ i.i.d. samples, then, with probability at least $1-\beta$,
\begin{align}
\label{equa::GeneralizedBound}
    \left\|\widehat{\psi}(\bx)-\psi(\bx)\right\|_1
    \leq O\left(\sqrt{n^{-1}\left(|\mathcal{X}|-\log\beta\right)}\right).
\end{align}
Here, $\|f(\bx)-g(\bx)\|_1 \defined \EE{|f(X)-g(X)||S=0}$ denotes the $\ell_1$-norm. 
\end{prop}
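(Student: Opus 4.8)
My plan is to treat the estimated influence function as a fixed, smooth functional of the empirical joint distribution, and then combine a deterministic stability estimate with a classical concentration bound. Concretely, Proposition~\ref{prop::inf_func_exp} writes every influence function $\psi$ in closed form in terms of the fixed classifier $h$, the two regression functions $P_{S|X}(1|\cdot)$ and $P_{Y|X,S=0}(1|\cdot)$ (estimated by $\hat s$ and $\hat y_0$), and finitely many scalar functionals $\mu_s,\hat\mu_s,\gamma_{a,b},\nu_{a,b}$ of the law $P_{X,Y,S}$. Each of these objects is an elementary ratio-of-linear functional of $P_{X,Y,S}$, so $\widehat\psi$ is a deterministic function $\Phi(\widehat P_{X,Y,S})$ of the empirical distribution over the finite alphabet $\calX\times\{0,1\}^2$, with $\psi=\Phi(P_{X,Y,S})$. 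I will (i) bound $\|\Phi(\widehat P)-\Phi(P)\|_1$ by $C\,\|\widehat P-P_{X,Y,S}\|_1$ on an event where no denominator degenerates, and (ii) bound $\|\widehat P-P_{X,Y,S}\|_1$, which also supplies that event.

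\textbf{Step 1: deterministic stability, metric by metric.} For SP the difference $\widehat\psi^{SP}(\bx)-\psi^{SP}(\bx)$ is just the error in $\hat\mu_0=\EE{h(X)|S=0}$, so $\|\widehat\psi^{SP}-\psi^{SP}\|_1=|\widehat{\hat\mu}_0-\hat\mu_0|\le\|h\|_\infty\,\|\widehat P-P_{X,Y,S}\|_1$. For FNR, FPR, FDR each expression has the form $(a(\bx)-\theta\,b(\bx))/d$ with $a,b,\theta\in[0,1]$ and $d\in\{\mu_0,1-\mu_0,\hat\mu_0\}$; I will use the telescoping identity $\hat a/\hat d - a/d = (\hat a-a)/\hat d - a(\hat d-d)/(d\hat d)$, noting that under the (implicit, absorbed into the $O(\cdot)$) assumption that the population denominators are bounded below by some $c>0$, the empirical denominators are $\ge c/2$ on the concentration event; this gives $\|\widehat\psi-\psi\|_1\le C_1(\|\hat y_0-y_0\|_1+\text{scalar errors})$. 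For DA the only nonlinearities are the logit terms $\log\frac{1-\hat s(\bx)}{\hat s(\bx)}$ and $\log\frac{\hat\mu_s(1-\hat\mu_{s'})}{(1-\hat\mu_s)\hat\mu_{s'}}$; assuming the population scores and class balances lie in $[\eta,1-\eta]$ and using $|\log\frac{1-u}{u}-\log\frac{1-v}{v}|\le(\eta(1-\eta))^{-1}|u-v|$, these are Lipschitz, giving $\|\widehat\psi^{DA}-\psi^{DA}\|_1\le C_2(\|\hat s-s\|_1+\text{scalar errors})$. Finally every scalar $\mu_s,\hat\mu_s,\gamma_{a,b},\nu_{a,b}$ and every regression-function deviation $\|\hat y_0-y_0\|_1$, $\|\hat s-s\|_1$ is, by the same telescoping argument and the fact that the norm $\|\cdot\|_1=\EE{\cdot\,|S=0}$ weights cell $\bx$ by $P_{X,S}(\bx,0)$, controlled by a constant multiple of $\|\widehat P-P_{X,Y,S}\|_1$; this $P_{X,S}(\bx,0)$-weighting is exactly what neutralizes rarely observed cells. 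Collecting the cases, $\|\widehat\psi-\psi\|_1\le C\,\|\widehat P-P_{X,Y,S}\|_1$ on the stated event.

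\textbf{Step 2: concentration and assembly.} The alphabet $\calX\times\{0,1\}^2$ has size $4|\calX|$, so the classical $\ell_1$-deviation inequality for empirical distributions (Weissman et al.; see also Devroye--Lugosi) gives $\PR[\|\widehat P-P_{X,Y,S}\|_1\ge\epsilon]\le 2^{4|\calX|}e^{-n\epsilon^2/2}$. Taking $\epsilon=\sqrt{2n^{-1}(4|\calX|\log 2-\log\beta)}=O(\sqrt{n^{-1}(|\calX|-\log\beta)})$ makes this at most $\beta$; for $n$ large enough that $\epsilon$ is below $c/2$ and $\eta/2$ (a restriction absorbed into the $O(\cdot)$) the denominator conditions of Step~1 hold on this event, and combining with Step~1 yields $\|\widehat\psi-\psi\|_1\le C\epsilon=O(\sqrt{n^{-1}(|\calX|-\log\beta)})$ with probability at least $1-\beta$.

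\textbf{Main obstacle.} The substantive difficulty is entirely in Step~1: showing that the denominators --- the class balances $\mu_0,1-\mu_0,\hat\mu_s$ and the probabilistic scores $s(\bx)$ --- stay away from $0$ and $1$, and in particular that the per-cell conditional estimates $\hat y_0(\bx),\hat s(\bx)$ do not blow up on cells seen only a handful of times. The $P_0$-weighting built into $\|\cdot\|_1=\EE{\cdot\,|S=0}$ (together with a high-probability bound on the missing mass, or a passage to a Laplace-smoothed estimator) is the mechanism that tames this, but turning that intuition into the clean Lipschitz bound above is where the real care is needed; the concentration step itself is off-the-shelf.
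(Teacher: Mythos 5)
Your proposal is correct and follows essentially the same route as the paper: a deterministic stability step (telescoping for the ratio-type metrics, a Lipschitz bound on the logit terms for $\textrm{DA}_{\lambda}$, with denominators kept away from zero on the concentration event) followed by the Weissman et al.\ $\ell_1$ deviation inequality for empirical distributions over the finite alphabet, with $\epsilon=O(\sqrt{n^{-1}(|\mathcal{X}|-\log\beta)})$. The only notable difference is that the paper's intermediate lemma simply \emph{assumes} the scalar functionals ($\mu_0$, $\gamma_{a,b}$, etc.) and the marginal $P_{X|S=0}$ are estimated at least as well as the conditional distributions, whereas you derive these from $\|\widehat P-P_{X,Y,S}\|_1$ directly by telescoping, which is marginally more self-contained.
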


The proof of Proposition~\ref{prop::generalizationthm_Cal} relies on the following lemma.
\begin{lem}
\label{lem::gen_bound}
Let $\widehat{\psi}(\bx)$ and $\psi(\bx)$ be the estimated influence function and the true influence function, respectively. If the given disparity metric is $\textrm{DA}_{\lambda}$,
\begingroup
\small
\begin{align*}
\left\|\widehat{\psi}(\bx)-\psi(\bx)\right\|_p \leq
O\left(\left\|\widehat{P}_{S|X}(1|\bx)-P_{S|X}(1|\bx)\right\|_p\right).
\end{align*}
\endgroup
For all other disparity metrics in Table~\ref{tabel:ExpDiscMet},
\begingroup
\small
\begin{align*}
\left\|\widehat{\psi}(\bx)-\psi(\bx)\right\|_p 
\leq O\left(\left\|\widehat{P}_{Y|X,S=0}(1|\bx)-P_{Y|X,S=0}(1|\bx)\right\|_p\right).
\end{align*}
\endgroup
Here, $\|f(\bx)-g(\bx)\|_p \defined \left(\EE{|f(X)-g(X)|^p|S=0}\right)^{1/p}$ denotes the $\ell_p$-norm for $p\geq 1$. 
\end{lem}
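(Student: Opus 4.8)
The plan is to view each influence function in Proposition~\ref{prop::inf_func_exp} as an explicit map from the single relevant estimated classifier --- $\hat{y}_0(\bx)$ for FDR, FNR, FPR, and $\hat{s}(\bx)$ for DA --- to a function on $\mathcal{X}$, and to show this map is Lipschitz in the appropriate norm on a regularity-restricted domain. Concretely, for each metric I would isolate the pieces of the closed form that change when a true conditional is replaced by its estimate: the $\bx$-dependent term (affine in $\hat{y}_0(\bx)$ with coefficient bounded by $\max\{h(\bx),1-h(\bx)\}\le 1$ for FNR/FPR/FDR, and equal to $\lambda\log\frac{1-\hat{s}(\bx)}{\hat{s}(\bx)}$ for DA), together with the scalar constants $\mu_0,\hat{\mu}_0,\hat{\mu}_1,\gamma_{a,b},\nu_{a,b}$ and the mean-correction term $\EE{\cdot\mid S=0}$, each of which is itself an expectation --- or a ratio of expectations with denominator bounded below --- of a bounded function of the estimated classifier under $P_0$. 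A single application of the triangle inequality then splits $\|\widehat{\psi}-\psi\|_p$ into an $\bx$-dependent contribution and a sum of finitely many scalar-error contributions.

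For the $\bx$-dependent contribution in the FNR/FPR/FDR cases, affineness gives $\|(\text{coeff})\cdot(\hat{y}_0(\bx)-P_{Y|X,S=0}(1|\bx))\|_p \le \|\hat{y}_0(\bx)-P_{Y|X,S=0}(1|\bx)\|_p$ at once; for DA I would use that $t\mapsto\log\frac{1-t}{t}$ is Lipschitz on any interval $[\delta,1-\delta]$, so under the (implicit) regularity assumption that conditional probabilities are bounded away from $0$ and $1$ one gets $\|\log\frac{1-\hat{s}(\bx)}{\hat{s}(\bx)}-\log\frac{1-P_{S|X}(1|\bx)}{P_{S|X}(1|\bx)}\|_p \le L\|\hat{s}(\bx)-P_{S|X}(1|\bx)\|_p$. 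For the scalar-error contributions, each scalar is a Lipschitz function of the estimated classifier with constant depending only on lower bounds for the denominators $\mu_0,1-\mu_0,\hat{\mu}_0,\hat{\mu}_1$ (again absorbed into the $O(\cdot)$), so $|\widehat{c}-c|\le C\|\hat{y}_0(\bx)-P_{Y|X,S=0}(1|\bx)\|_1$ (resp.\ with $\hat{s}$); since a constant has $\ell_p$-norm equal to its absolute value and $\|\cdot\|_1\le\|\cdot\|_p$ for $p\ge1$ under the probability measure $P_0$, this is $O(\|\hat{y}_0(\bx)-P_{Y|X,S=0}(1|\bx)\|_p)$. Summing the terms yields the two claimed bounds; SP is immediate, since $\psi^{SP}$ does not involve $\hat{s}$ or $\hat{y}_0$ and its only data-dependent ingredient $\hat{\mu}_0$ is estimated with $O(n^{-1/2})$ error, which is dominated by the right-hand side.

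The main obstacle is not the bookkeeping but controlling the \emph{nonlinear} dependences --- the $\log\frac{1-t}{t}$ terms in $\psi^{DA}$ and the expectation-ratio constants $\gamma_{a,b},\nu_{a,b}$ --- uniformly, which is exactly where one must invoke the regularity conditions (classifier outputs and outcome probabilities bounded away from $0$ and $1$, hence all logarithms Lipschitz and all denominators bounded below) that the $O(\cdot)$ notation quietly hides. Once those bounds are in place every term in the decomposition is a constant-factor multiple of $\|\hat{y}_0(\bx)-P_{Y|X,S=0}(1|\bx)\|_p$ (or $\|\hat{s}(\bx)-P_{S|X}(1|\bx)\|_p$ for DA), and the lemma follows; Proposition~\ref{prop::generalizationthm_Cal} then results from combining it with a standard finite-support concentration bound $\|\hat{y}_0(\bx)-P_{Y|X,S=0}(1|\bx)\|_1 = O(\sqrt{n^{-1}(|\mathcal{X}|-\log\beta)})$ (and likewise for $\hat{s}$).
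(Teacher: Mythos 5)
Your proposal is correct and follows essentially the same route as the paper's proof: a triangle-inequality decomposition of $\widehat{\psi}-\psi$ into the $\bx$-dependent term (affine in $\hat{y}_0$ for the class-based metrics, handled via the Lipschitz bound $|\log(a/b)|\leq|a-b|/\min\{a,b\}$ with probabilities bounded away from $0$ and $1$ for DA) plus finitely many scalar-error terms, which the paper likewise controls by assuming that errors in estimating marginals and Bernoulli parameters are dominated by the conditional-distribution estimation error. The regularity conditions you flag as hidden in the $O(\cdot)$ are exactly the ones the paper introduces via the constant $m_X$ and its stated assumptions \eqref{eq::assume_marg_leq_chan}--\eqref{eq::assume_bern_2_leq_chan}.
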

\begin{proof}
We denote $\widehat{P}$ and $\widehat{\Pr}$ as estimated probability distribution and probability, respectively. Then we assume that
\begin{align}
    \left\|\widehat{P}_{X|S=0}-P_{X|S=0}\right\|_p &\lesssim \left\|\widehat{P}_{Y|X,S=0}(1|\bx)-P_{Y|X,S=0}(1|\bx)\right\|_p;\label{eq::assume_marg_leq_chan}\\
    \left|\widehat{\Pr}(Y=1|S=0)-\Pr(Y=1|S=0)\right| &\lesssim \left\|\widehat{P}_{Y|X,S=0}(1|\bx)-P_{Y|X,S=0}(1|\bx)\right\|_p;\label{eq::assume_bern_1_leq_chan}\\
    \left|\widehat{\Pr}(\hat{Y}=0|Y=1,S=0)-\Pr(\hat{Y}=0|Y=1,S=0)\right|
    &\lesssim \left\|\widehat{P}_{Y|X,S=0}(1|\bx)-P_{Y|X,S=0}(1|\bx)\right\|_p,\label{eq::assume_bern_2_leq_chan}
\end{align}
where $\left\|\widehat{P}_{X|S=0}-P_{X|S=0}\right\|_p \defined \left(\sum_{\bx\in\mathcal{X}}\left|\widehat{P}_{X|S=0}(\bx)-P_{X|S=0}(\bx)\right|^{p}\right)^{1/p}$.
We make similar assumptions for $\widehat{P}_{S|X}(1|\bx)$ (i.e., the $\ell_p$ distance between $\widehat{P}_{S|X}(1|\bx)$ and $P_{S|X}(1|\bx)$ upper bounds the left-hand side of \eqref{eq::assume_marg_leq_chan}, \eqref{eq::assume_bern_1_leq_chan}, \eqref{eq::assume_bern_2_leq_chan}). These assumptions are reasonable in practice since estimating conditional distribution is usually harder than estimating marginal distribution which is harder than estimating the distribution of Bernoulli random variable.
\begin{enumerate}
    \item \textbf{SP.} The influence function under SP is
\begin{align*}
    \psi(\bx) 
    &=-h(\bx)+\Pr(\hat{Y}=1|S=0).
\end{align*}
In order to compute the influence function under SP, we only need to estimate $\Pr(\hat{Y}=1|S=0)$ since the classifier $h(\bx)$ is given. Estimating the distribution of a Bernoulli random variable is more reliable than estimating the conditional distribution so 
\begin{align*}
    &\|\widehat{\psi}(\bx)-\psi(\bx)\|_p 
    \lesssim \left\|P_{Y|X,S=0}(1|\bx)-\widehat{P}_{Y|X,S=0}(1|\bx)\right\|_p.
\end{align*}

\item \textbf{Class-Based Error Metrics.} Next, we present a proof of the generalization bound for FNR. Similar proofs hold for other class-based error metrics such as FDR and FPR.

The influence function under FNR is
\begin{align*}
    \psi(\bx)=\frac{\EE{r_2(X)|S=0}r_1(\bx)-\EE{r_1(X)|S=0}r_2(\bx)}{\Pr(Y=1|S=0)^2},
\end{align*}
where $r_1(\bx) = \W(0|\bx)P_{Y|X,S=0}(1|\bx)$ and $r_2(\bx) = P_{Y|X,S=0}(1|\bx)$. Note that 
\begin{align*}
    \EE{r_2(X)|S=0}&=\Pr(Y=1|S=0),\\ \EE{r_1(X)|S=0}&=\Pr(\hat{Y}=0,Y=1|S=0).
\end{align*}
Hence, the influence function under FNR has the following equivalent expression.
\begin{align}
    \psi(\bx)
    &=\frac{\Pr(Y=1|S=0)\W(0|\bx)-\Pr(\hat{Y}=0,Y=1|S=0)}{\Pr(Y=1|S=0)^2}P_{Y|X,S=0}(1|\bx)\nonumber\\
    &=\frac{\W(0|\bx)-\Pr(\hat{Y}=0|Y=1,S=0)}{\Pr(Y=1|S=0)}P_{Y|X,S=0}(1|\bx).\label{eq::psi_fnr_proof_eq_exp1}
\end{align}
The estimated influence function under FNR is
\begin{align}
    \widehat{\psi}(\bx)
    &=\frac{\W(0|\bx)-\widehat{\Pr}(\hat{Y}=0|Y=1,S=0)}{\widehat{\Pr}(Y=1|S=0)}\widehat{P}_{Y|X,S=0}(1|\bx).\label{eq::empsi_fnr_proof_eq_exp2}
\end{align}
Following from \eqref{eq::psi_fnr_proof_eq_exp1}, \eqref{eq::empsi_fnr_proof_eq_exp2} and the triangle inequality, we have, under FNR, 
\begin{align}
    &\|\psi(\bx)-\widehat{\psi}(\bx)\|_p \nonumber\\
    &\leq \left\|\frac{\W(0|\bx)-\Pr(\hat{Y}=0|Y=1,S=0)}{\Pr(Y=1|S=0)}(P_{Y|X,S=0}(1|\bx)-\widehat{P}_{Y|X,S=0}(1|\bx))\right\|_p\nonumber\\
    &\quad +\left\|\widehat{P}_{Y|X,S=0}(1|\bx)\left(\frac{\W(0|\bx)-\Pr(\hat{Y}=0|Y=1,S=0)}{\Pr(Y=1|S=0)}\right.\right.\nonumber\\
    &\left.\left.\quad\quad\quad-\frac{\W(0|\bx)-\widehat{\Pr}(\hat{Y}=0|Y=1,S=0)}{\widehat{\Pr}(Y=1|S=0)}\right)\right\|_p\nonumber\\
    &\leq \left\|\frac{1}{\Pr(Y=1|S=0)}(P_{Y|X,S=0}(1|\bx)-\widehat{P}_{Y|X,S=0}(1|\bx))\right\|_p\nonumber\\
    &\quad +\left\|\frac{\W(0|\bx)-\Pr(\hat{Y}=0|Y=1,S=0)}{\Pr(Y=1|S=0)}-\frac{\W(0|\bx)-\widehat{\Pr}(\hat{Y}=0|Y=1,S=0)}{\widehat{\Pr}(Y=1|S=0)}\right\|_p\nonumber\\
    &\lesssim \left\|P_{Y|X,S=0}(1|\bx)-\widehat{P}_{Y|X,S=0}(1|\bx)\right\|_p\nonumber\\
    &\quad +\left\|\frac{\W(0|\bx)-\Pr(\hat{Y}=0|Y=1,S=0)}{\Pr(Y=1|S=0)}-\frac{\W(0|\bx)-\widehat{\Pr}(\hat{Y}=0|Y=1,S=0)}{\widehat{\Pr}(Y=1|S=0)}\right\|_p. \label{equa::Esterror_inf_one}
\end{align}
Next, we have
\begin{align}
    &\left\|\frac{\W(0|\bx)-\Pr(\hat{Y}=0|Y=1,S=0)}{\Pr(Y=1|S=0)}-\frac{\W(0|\bx)-\widehat{\Pr}(\hat{Y}=0|Y=1,S=0)}{\widehat{\Pr}(Y=1|S=0)}\right\|_p\nonumber\\
    &\leq\left\|\frac{\W(0|\bx)}{\Pr(Y=1|S=0)}-\frac{\W(0|\bx)}{\widehat{\Pr}(Y=1|S=0)}\right\|_p+\left|\frac{\Pr(\hat{Y}=0|Y=1,S=0)}{\Pr(Y=1|S=0)}-\frac{\widehat{\Pr}(\hat{Y}=0|Y=1,S=0)}{\widehat{\Pr}(Y=1|S=0)}\right|\nonumber\\
    &\leq\left|\frac{\widehat{\Pr}(Y=1|S=0)-\Pr(Y=1|S=0)}{\Pr(Y=1|S=0)\widehat{\Pr}(Y=1|S=0)}\right|\nonumber\\
    &\quad +\left|\frac{\Pr(\hat{Y}=0|Y=1,S=0)\widehat{\Pr}(Y=1|S=0)-\widehat{\Pr}(\hat{Y}=0|Y=1,S=0)\Pr(Y=1|S=0)}{\Pr(Y=1|S=0)\widehat{\Pr}(Y=1|S=0)}\right|\nonumber\\
    &\lesssim\left|\widehat{\Pr}(Y=1|S=0)-\Pr(Y=1|S=0)\right|\nonumber\\
    &\quad +\left|\Pr(\hat{Y}=0|Y=1,S=0)\widehat{\Pr}(Y=1|S=0)-\widehat{\Pr}(\hat{Y}=0|Y=1,S=0)\Pr(Y=1|S=0)\right|\nonumber\\
    &\leq 2\left|\widehat{\Pr}(Y=1|S=0)-\Pr(Y=1|S=0)\right|+\left|\widehat{\Pr}(\hat{Y}=0|Y=1,S=0)-\Pr(\hat{Y}=0|Y=1,S=0)\right|.\label{equa::proof_gen_fnr_bern_chan}
\end{align}
Combining \eqref{equa::Esterror_inf_one} and \eqref{equa::proof_gen_fnr_bern_chan} with the assumptions \eqref{eq::assume_bern_1_leq_chan} and \eqref{eq::assume_bern_2_leq_chan}, we have, for FNR,
\begin{align*}
    &\|\widehat{\psi}(\bx)-\psi(\bx)\|_p 
    \lesssim \left\|P_{Y|X,S=0}(1|\bx)-\widehat{P}_{Y|X,S=0}(1|\bx)\right\|_p.
\end{align*}
\item \textbf{DA.}
The influence function under DA is
\begin{equation*}
\begin{aligned}
     \psi(\bx)=&\left(\log\frac{\PYhata(1)\PYhatb(0)}{\PYhatb(1)\PYhata(0)}\right)\left(\W(1|\bx)-\PYhata(1)\right)\\
     &+\lambda\left(\log\frac{1-P_{S|X}(1|\bx)}{P_{S|X}(1|\bx)}-\EE{\log\frac{1-P_{S|X}(1|X)}{P_{S|X}(1|X)}\Bigg|S=0}\right).
\end{aligned}
\end{equation*}
Since $h(\bx)=P_{\hat{Y}|X}(1|\bx)$ is a given classifier, estimating
\begin{align*}
    \left(\log\frac{\PYhata(1)\PYhatb(0)}{\PYhatb(1)\PYhata(0)}\right)\left(\W(1|\bx)-\PYhata(1)\right)
\end{align*}
is more reliable than estimating
\begin{align}
    \psi_r(\bx) 
    &\defined \log\frac{1-P_{S|X}(1|\bx)}{P_{S|X}(1|\bx)}-\EE{\log\frac{1-P_{S|X}(1|X)}{P_{S|X}(1|X)}\Bigg|S=0}\nonumber\\
    &=\log\frac{1-P_{S|X}(1|\bx)}{P_{S|X}(1|\bx)}-\sum_{\bx\in\mathcal{X}}P_{X|S=0}(\bx)\log\frac{1-P_{S|X}(1|\bx)}{P_{S|X}(1|\bx)}.\label{equa::psi_r_expr}
\end{align}
Next, we bound the generalization error of estimating $\psi_r(\bx)$. Its estimator is
\begin{align}
    \widehat{\psi}_r(\bx) 
    =\log\frac{1-\widehat{P}_{S|X}(1|\bx)}{\widehat{P}_{S|X}(1|\bx)}-\sum_{\bx\in\mathcal{X}}\widehat{P}_{X|S=0}(\bx)\log\frac{1-\widehat{P}_{S|X}(1|\bx)}{\widehat{P}_{S|X}(1|\bx)}.\label{equa::est_psi_r_expr}
\end{align}
Note that, for $a,b>0$,
\begin{align}
    \left|\log\frac{a}{b}\right| \leq \frac{|a-b|}{\min\{a,b\}}.
\end{align}
Then
\begin{align}
    &\left|\log\frac{1-\widehat{P}_{S|X}(1|\bx)}{\widehat{P}_{S|X}(1|\bx)}-\log\frac{1-P_{S|X}(1|\bx)}{P_{S|X}(1|\bx)}\right|\nonumber\\
    &\leq |\widehat{P}_{S|X}(1|\bx)-P_{S|X}(1|\bx)|\left(\frac{1}{\min\{\widehat{P}_{S|X}(1|\bx),P_{S|X}(1|\bx)\}} + \frac{1}{\min\{1-\widehat{P}_{S|X}(1|\bx),1-P_{S|X}(1|\bx)\}}\right)\nonumber\\
    &\leq |\widehat{P}_{S|X}(1|\bx)-P_{S|X}(1|\bx)| \frac{2}{m_X},\label{equa::log_Phat_Pleql1}
\end{align}
where $m_X$ is a constant number:
\begin{align*}
    &m_X \defined \\
    &\min\left\{\left\{\widehat{P}_{S|X}(1|\bx)|\bx\in\mathcal{X}\right\}\cup \left\{P_{S|X}(1|\bx)|\bx\in\mathcal{X}\right\}\cup\left\{1-\widehat{P}_{S|X}(1|\bx)|\bx\in\mathcal{X}\right\}\cup \left\{1-P_{S|X}(1|\bx)|\bx\in\mathcal{X}\right\}\right\}.
\end{align*}
Also of note, for any $\bx\in\mathcal{X}$,
\begin{align}
    \left|\log\frac{1-\widehat{P}_{S|X}(1|\bx)}{\widehat{P}_{S|X}(1|\bx)}\right|
    \leq \frac{\left|1-2\widehat{P}_{S|X}(1|\bx)\right|}{\min\left\{\widehat{P}_{S|X}(1|\bx),1-\widehat{P}_{S|X}(1|\bx)\right\}}\leq \frac{1}{m_X}.\label{equa::log_one_min_Phat_Phat}
\end{align}
Combining \eqref{equa::psi_r_expr} and \eqref{equa::est_psi_r_expr} with \eqref{equa::log_Phat_Pleql1} and \eqref{equa::log_one_min_Phat_Phat}, we have
\begin{align*}
    &\left|\widehat{\psi}_r(\bx)-\psi_r(\bx)\right|\\
    &\leq \frac{2}{m_X}\left|\widehat{P}_{S|X}(1|\bx)-P_{S|X}(1|\bx)\right| + \frac{1}{m_X} \sum_{\bx \in \mathcal{X}} \left|\widehat{P}_{X|S=0}(\bx)-P_{X|S=0}(\bx)\right| \\
    &\quad + \frac{2}{m_X} \sum_{\bx \in \mathcal{X}} \left|\widehat{P}_{S|X}(1|\bx)-P_{S|X}(1|\bx)\right|P_{X|S=0}(\bx)\\
    &=\frac{2}{m_X}\left|\widehat{P}_{S|X}(1|\bx)-P_{S|X}(1|\bx)\right| + \frac{1}{m_X}  \left\|\widehat{P}_{X|S=0}-P_{X|S=0}\right\|_1 + \frac{2}{m_X} \left\|\widehat{P}_{S|X}(1|\bx)-P_{S|X}(1|\bx)\right\|_1.
\end{align*}
Therefore,
\begin{align*}
    &\left\|\widehat{\psi}_r(\bx)-\psi_r(\bx)\right\|_p \\
    &\leq \frac{2}{m_X}\left\|\widehat{P}_{S|X}(1|\bx)-P_{S|X}(1|\bx)\right\|_p + \frac{1}{m_X}  \left\|\widehat{P}_{X|S=0}-P_{X|S=0}\right\|_1 + \frac{2}{m_X} \left\|\widehat{P}_{S|X}(1|\bx)-P_{S|X}(1|\bx)\right\|_1.
\end{align*}
Based on the assumption: $\left\|\widehat{P}_{X|S=0}-P_{X|S=0}\right\|_1 \lesssim \left\|\widehat{P}_{S|X}(1|\bx)-P_{S|X}(1|\bx)\right\|_1$, we have
\begin{align*}
    \left\|\widehat{\psi}_r(\bx)-\psi_r(\bx)\right\|_p
    &\lesssim \left\|\widehat{P}_{S|X}(1|\bx)-P_{S|X}(1|\bx)\right\|_p + \left\|\widehat{P}_{S|X}(1|\bx)-P_{S|X}(1|\bx)\right\|_1\\
    &\lesssim \left\|\widehat{P}_{S|X}(1|\bx)-P_{S|X}(1|\bx)\right\|_p.
\end{align*}
Hence, for DA,
\begin{align*}
    \|\widehat{\psi}(\bx)-\psi(\bx)\|_p \lesssim \left\|\widehat{P}_{S|X}(1|\bx)-P_{S|X}(1|\bx)\right\|_p.
\end{align*}
\end{enumerate}

Proposition~\ref{prop::generalizationthm_Cal} follows from Lemma~\ref{lem::gen_bound} and the following large deviation results by \citet{weissman2003inequalities}. For all $\epsilon > 0$,
\begin{equation*}
    \Pr\left(\|\widehat{P}-P\|_1 \geq \epsilon\right) 
    \leq (2^M-2)\exp\left(-n\bar{\phi}(\pi_P)\epsilon^2/4\right),
\end{equation*}
where $P$ is a probability distribution on the set $[M]$, $\widehat{P}$ is the empirical distribution obtained from $n$ i.i.d. samples,
$\pi_P \defined \max_{\mathcal{M}\subseteq[M]}\min(P(\mathcal{M}),1-P(\mathcal{M}))$,
\begin{equation*}
    \bar{\phi}(p)\defined 
    \begin{cases}
    \frac{1}{1-2p}\log \frac{1-p}{p} & p\in[0,1/2), \\ 
    2 & p=1/2,
    \end{cases}
\end{equation*}
and $\|\widehat{P}-P\|_1 \defined \sum_{x\in\mathcal{X}}|\widehat{P}(x)-P(x)|$.
Note that $\bar{\phi}(\pi_P)\geq 2$ which implies that
\begin{equation}
\label{equa:L1bound_weissman}
\begin{aligned}
    \Pr\left(\|\widehat{P}-P\|_1 \geq \epsilon\right)
    &\leq \exp(M)\exp(-n\epsilon^2/2).
\end{aligned}
\end{equation}
Hence, by taking $P=P_{Y,X|S=0}$, $M=|\mathcal{Y}||\mathcal{X}|=2|\mathcal{X}|$ and $\epsilon = \sqrt{\frac{2}{n}\left(M-\log\beta\right)}$, Inequality~\eqref{equa:L1bound_weissman} implies that, with probability at least $1-\beta$,
\begin{equation}
\label{equa:L1bound_thm}
    \left\|\widehat{P}_{Y,X|S=0}-P_{Y,X|S=0}\right\|_1 \leq \sqrt{\frac{2}{n}\left(2|\mathcal{X}|-\log\beta\right)},
\end{equation}
where $\widehat{P}_{Y,X|S=0}$ is the empirical distribution obtained from $n$ i.i.d. samples. Similarly, with probability at least $1-\beta$,
\begin{equation}
    \left\|\widehat{P}_{S,X}-P_{S,X}\right\|_1 \leq \sqrt{\frac{2}{n}\left(2|\mathcal{X}|-\log\beta\right)}.
\end{equation}

Let $\widehat{P}_{Y|X,S=0}=\frac{\widehat{P}_{Y,X|S=0}}{\widehat{P}_{X|S=0}}$ be the empirical conditional distribution obtained from $n$ i.i.d. samples. Then, for the disparity metrics in Table~\ref{tabel:ExpDiscMet} except DA, with probability at least $1-\beta$,
\begin{align*}
    \left\|\widehat{\psi}(\bx)-\psi(\bx)\right\|_1
    &\lesssim \left\|\widehat{P}_{Y|X,S=0}(1|\bx)-P_{Y|X,S=0}(1|\bx)\right\|_1\\
    &\lesssim \left\|\widehat{P}_{Y,X|S=0}-P_{Y,X|S=0}\right\|_1\\
    &\lesssim \sqrt{\frac{1}{n}\left(|\mathcal{X}|-\log\beta\right)}.
\end{align*}
Here the second inequality holds true because
\begin{align*}
    &\left\|\widehat{P}_{Y|X,S=0}(1|\bx)-P_{Y|X,S=0}(1|\bx)\right\|_1\\
    &=\sum_{\bx\in\mathcal{X}} P_{X|S=0}(\bx)\left|\widehat{P}_{Y|X,S=0}(1|\bx)-P_{Y|X,S=0}(1|\bx)\right|\\
    &\leq \left\|\widehat{P}_{Y,X|S=0}-P_{Y,X|S=0}\right\|_1+\sum_{\bx\in\mathcal{X}} \widehat{P}_{Y|X,S=0}(1|\bx)\left|\widehat{P}_{X|S=0}(\bx)-P_{X|S=0}(\bx)\right|\\
    &\leq \left\|\widehat{P}_{Y,X|S=0}-P_{Y,X|S=0}\right\|_1+\left\|\widehat{P}_{X|S=0}-P_{X|S=0}\right\|_1
    \lesssim \left\|\widehat{P}_{Y,X|S=0}-P_{Y,X|S=0}\right\|_1.
\end{align*}
Similar analysis also holds for DA.

\end{proof}

\section{Supporting Experimental Results}
\label{Appendix::Experiments}

\subsection{Experiments on Synthetic Datasets}

\subsubsection*{Descent Procedure}

\noindent \textit{Setup}: We consider a toy problem with 3 binary variables $X = (X_1,X_2,X_3).$ We define $p_i = \Pr(X_i=1|S=0)$ and $q_i = \Pr(X_i=1|S = 1)$, and assume that:
\begin{align*}
(p_1,p_2,p_3)&=(0.9,0.2,0.2)\\
(q_1,q_2,q_3)&=(0.1,0.5,0.5)
\end{align*}
Given any value of $X$, we draw the value of $Y$ for using the same distribution for each group, namely: $$P_{Y|X,S=0}(1|\bx) = P_{Y|X,S=1}(1|\bx) = \mathsf{logistic}(5x_1-2x_2-2x_3).$$
We train a logistic regression model over 50k samples. We randomly draw 12.5k samples for the auditing dataset and 12.5k samples for the holdout dataset, and apply the descent procedure in Algorithm~\ref{alg:descent} for the FPR metric. At each step, the influence function is computed on the auditing dataset, and applied to both the auditing and the holdout set. 

\noindent \textit{Results}: As shown in Figure~\ref{Fig::toy_descent_fpr}, the procedure converges to a counterfactual distribution after around 40 iterations (we show additional steps for the sake of illustration). In practice, a stopping rule can be designed to stop the descent procedure based on number of iterations or a target discrimination gap value. Then we use the proposed preprocessor to map samples from $S=0$ to new samples. Then the value of FPR decreases from 29.1\% to 4.1\%.

\subsubsection*{Joint Proxies}
\label{sec::toy_example_jointproxy}

\noindent \textit{Setup}: We consider a simple experiment to show that the preprocessor mitigates discrimination while removing a single proxy variable does not. We consider a setting where $X=(X_1,X_2,X_3)\in\{-1,1\}^3$ and choose the joint distribution matrices of $(X_1,X_2)$ for $S=0$ and $S=1$ as
\begingroup
\small
\begin{equation}
    \mathbf{P}_0 = \left(\begin{matrix} 0.60 & 0.00 \\ 0.25 & 0.15 \end{matrix}\right), 
    \mathbf{P}_1 = \left(\begin{matrix} 0.05 & 0.00 \\ 0.20 & 0.75 \end{matrix}\right).
\end{equation}
\endgroup
Then we choose $X_3$ to be independent of $(X_1,X_2)$ with $\Pr(X_3=1|S=i)=0.3$ for $i=0,1$. We draw the values of $Y$ according to $P_{Y|X,S=i}(1|\bx)=\mathsf{logistic}(6x_1x_2+x_3)$ for $i=0,1$, and fit a logistic regression using 50k samples.

\noindent \textit{Results}: The value of $\textrm{DA}_{0}$ is 14.0\%. In this case, both $X_1$ and $X_2$ are proxy variable. We remove $X_1$ from dataset and retrain a logistic regression as a classifier. It turns out that the value of $\textrm{DA}_{0}$ becomes larger: 24.8\%. This is because the pair $(X_1,X_2)$ is a joint proxy and, consequently, removing one of them could not reduce discrimination.

Next, we apply Algorithm~\ref{alg:descent} and the proposed preprocessor to decrease discrimination. For the sake of example, we randomly draw 12.5k new samples for the auditing dataset and 12.5k samples for the holdout dataset, and apply the descent procedure in Algorithm~\ref{alg:descent} under $\textrm{DA}_{0}$. At each step, the influence function is computed on the auditing dataset, and applied to both the auditing and the holdout set. Then we use the preprocessor to map samples from $S=0$ to new samples and $\textrm{DA}_{0}$ becomes 0.0\%.

\end{document}